\let\proof\@undefined
\let\endproof\@undefined
\let\theorem\@undefined
\let\lemma\@undefined
\let\proposition\@undefined
\let\fact\@undefined
\let\algorithm\@undefined
\renewcommand{\leq}{\leqslant}
\renewcommand{\ge}{\geqslant}
\renewcommand{\geq}{\geqslant}
\newenvironment{proof}[1][Proof]{\noindent\textbf{#1.} }{\ \rule{0.5em}{0.5em}}
\newtheorem{theorem}{Theorem}[section]
\newtheorem{lemma}[theorem]{Lemma}
\newtheorem{claim}[theorem]{Claim}
\newtheorem{proposition}[theorem]{Proposition}
\newtheorem{fact}[theorem]{Fact}
\newtheorem{definition}[theorem]{Definition}
\newtheorem{remark}[theorem]{Remark}
\newtheorem{conjecture}[theorem]{Conjecture}
\newcommand{\ignore}[1]{}
\newcommand{\rnote}[1]{\footnote{{\bf [[Rocco: {#1}\bf ]] }}}
\newcommand{\ynote}[1]{[{\tiny Yi: \bf #1}]\marginpar{*}}
\renewcommand{\Pr}{{\bf Pr}}
\newcommand{\E}{{\bf E}}
\newcommand{\Ex}{\mathop{\bf E\/}}
\newcommand{\Fold}{\mathrm{Fold}}
\newcommand{\R}{\mathbb R}
\newcommand{\N}{\mathbb N}
\newcommand{\bn}{\bits^n}
\newcommand{\eps}{\epsilon}
\newcommand{\sign}{\mathrm{sign}}
\newcommand{\sgn}{\mathrm{sign}}
\newcommand{\weight}{\mathrm{wt}}
\newcommand{\littlesum}{{\textstyle \sum}}
\newcommand{\littleprod}{{\textstyle \prod}}
\newcommand{\half}{{\textstyle \frac12}}
\newcommand{\calL}{{\cal L}}
\newcommand{\calD}{{\cal D}}
\newcommand{\calH}{{\cal H}}
\newcommand{\calT}{{\cal T}}
\newcommand{\calC}{{\cal C}}
\newcommand{\calG}{{\cal G}}
\newcommand{\RP}{\mathrm{RP}}
\newcommand{\NP}{\mathrm{NP}}
\newcommand{\Pp}{\mathrm{P}}
\newcommand{\opt}{\mathrm{Opt}}
\newcommand{\Opt}{\opt}
\newcommand{\mcl}[1]{\mathcal {#1}}
\newcommand{\cD}{\mcl{D}}
\newcommand{\bits}{\{0,1\}}
\newcommand{\vnote}[1]{}
\newcommand{\pnote}[1]{}
\newcommand{\yinote}[1]{}
\newcommand{\vinote}[1]{}
\begin{document}

\title{Hardness Results for Agnostically Learning Low-Degree Polynomial
Threshold Functions}

\author{Ilias Diakonikolas\thanks{Research supported by NSF grants CCF-0728736, CCF-0525260,
and by an Alexander S. Onassis Foundation Fellowship.}\\
Columbia University\\
{\tt ilias@cs.columbia.edu}\\
\and
Ryan O'Donnell\thanks{Supported by NSF grants CCF-0747250 and CCF-0915893, BSF grant
2008477, and Sloan and Okawa fellowships.}\\
CMU\\
{\tt odonnell@cs.cmu.edu}\\
\and
Rocco A. Servedio\thanks{Supported by NSF grants
CCF-0347282, CCF-0523664 and CNS-0716245, and
by DARPA award HR0011-08-1-0069.}\\
Columbia University\\
{\tt rocco@cs.columbia.edu}\\
\and
Yi Wu\thanks{Part of the work is done when the author is at CMU,  supported by the National Science Foundation under grant numbers CCF-0747250, CCR-0122588; US Army Research Office under grant number DAAD-190210389; and generous support from International Business Machines.}
\\
IBM Almaden\\
{\tt wuyi@us.ibm.com}
}
% \author{Corey Gray\thanks{Society for Industrial and Applied Mathematics.} \\
% \and
% Tricia Manning\thanks{Society for Industrial and Applied Mathematics.}}
\date{}

\maketitle

%\pagenumbering{arabic}
%\setcounter{page}{1}%Leave this line commented out.

\begin{abstract} \small\baselineskip=9pt 
Hardness results for maximum agreement problems have close connections to hardness results for
proper learning in computational learning theory.  In this paper we prove two hardness results for the problem of finding a low degree polynomial threshold function (PTF) which has the maximum possible agreement with a given set of labeled examples in $\R^n \times \{-1,1\}.$
We prove that for any constants $d\geq 1, \eps > 0$,

\begin{itemize}

\item
Assuming the Unique Games Conjecture, no polynomial-time algorithm can find a degree-$d$ PTF that
is consistent with a $(\half + \eps)$ fraction of a given set of labeled examples in $\R^n \times \{-1,1\}$, even if there exists a degree-$d$ PTF that is consistent with a $1-\eps$ fraction of the examples.

\item It is $\NP$-hard to find a degree-2 PTF that is consistent with
a $(\half + \eps)$ fraction of a given set of labeled examples in $\R^n \times \{-1,1\}$, even if
there exists a halfspace (degree-1 PTF) that is consistent with a $1 - \eps$ fraction of the
examples.

\end{itemize}

These results immediately imply the following hardness of learning results: (i) Assuming the
Unique Games Conjecture, there is no better-than-trivial proper  learning algorithm that agnostically learns degree-$d$ PTFs under arbitrary distributions; (ii)  There is no better-than-trivial learning algorithm  that outputs degree-$2$ PTFs and agnostically learns halfspaces (i.e. degree-$1$ PTFs) under arbitrary distributions.
\end{abstract}

\section{Introduction}
A \emph{polynomial threshold function} (PTF) of degree $d$ is a function
$f: \R^n \to \{-1,+1\}$ of the form $f(x) = \sign(p(x))$, where
\[p(x)=\sum_{\text{multiset } S\subseteq[n], |S|\leq d} c_{S} \prod_{i\in S} x_i\]
is a degree-$d$ multivariate
polynomial with real coefficients. Degree-$1$  PTFs are commonly known as \emph{halfspaces} or \emph{linear threshold functions}, and have been intensively studied for decades in fields as diverse as theoretical neuroscience, social choice theory and Boolean circuit complexity.

The last few years have witnessed a surge of research interest and results in theoretical computer science on halfspaces and low-degree PTFs, see e.g. \cite{Servedio:07cc,RS:08,FGKP:journal,FGRW09,GR09journal,DHK+:10, Kane:10}.
%We are usually interested in the case of low degree PTFs; i.e., $d$ is some constant independent of %$n$, the dimension of the input.
One reason for this interest is the central role played by low-degree PTFs (and halfspaces in particular) in both practical and theoretical aspects of \emph{machine learning}, where many learning algorithms either implicitly or explicitly use low-degree PTFs as their hypotheses.  More specifically, several widely used linear separator learning algorithms such as the Perceptron algorithm and the ``maximum margin'' algorithm at the heart of Support Vector Machines output halfspaces as their hypotheses.  These and other halfspace-based learning methods are commonly augmented in practice with the ``kernel trick,'' which makes it possible to efficiently run these algorithms over an expanded feature space and thus potentially learn from labeled  data that is not linearly separable in $\R^n$.   The ``polynomial kernel'' is a popular kernel to use in this way; when, as is usually the case, the degree parameter in the polynomial kernel is set to be a small constant, these algorithms output hypotheses that are equivalent to low-degree PTFs.  Low-degree PTFs are also used as hypotheses in several important learning algorithms with a more complexity-theoretic flavor, such as the low-degree algorithm of Linial \emph{et al.} \cite{LMN93} and its variants \cite{Jackson:97,OdonnellServedio:07}, including some algorithms for distribution-specific agnostic learning \cite{KKMS05,KOS:08,BOW:08,DHK+:10}.

\ignore{
Algorithms that learn using low-degree PTF hypotheses play an important role in computational learning theory as well as in practical machine learning.  The seminal work of Linial, Mansour and Nisan ~\cite{LMN93} proposed the ``low degree algorithm'' that works by estimating all the low-degree Fourier coefficients of the target function and outputting the sign of the resulting low-degree polynomial approximator to the target function.   This algorithm and its variants have proved immensely useful in obtaining powerful learning algorithms for various types of Boolean functions such as constant-depth circuits \cite{LMN93}, monotone functions \cite{BshoutyTamon:96} and intersections of halfspaces \cite{KOS:04}, and has also proved useful in developing agnostic learning algorithms for different types of Boolean functions as well \cite{KKMS05,KOS:08,BOW:08,DHK+:10}.  Several algorithmic variants of the basic low-degree algorithm \cite{Jackson:97,OdonnellServedio:07} that have been developed for other concept classes also work by constructing a low-degree PTF hypothesis.
}

Given the importance of learning algorithms that construct low-degree PTF hypotheses, it is a natural  goal to study the limitations of learning algorithms that work in this way.  On the positive side, it is well known that if there is a PTF (of constant degree $d$) that is consistent with \emph{all} the examples in a data set, then a consistent hypothesis can be found in polynomial time simply by using linear programming (with the $\Theta(n^d)$ monomials of degree at most $d$ as the variables in the LP). However, the assumption that some low-degree PTF correctly labels all examples seems quite strong; in practice data is often noisy or too complex to be consistent with a simple concept.  Thus we are led to ask:  if no low-degree PTF classifies an entire data set perfectly, to what extent can the data be learned using low-degree PTF hypoptheses?

In this paper, we address this question under the agnostic learning framework~\cite{Hau92,KSS94}. Roughly speaking, a function class $\calC$ is agnostically learnable if we can efficiently find a hypothesis that has accuracy arbitrarily close to the accuracy of the best hypothesis in $\calC$. Uniform convergence results~\cite{Hau92} imply that learnability in this model is essentially equivalent to the ability to come up with a hypothesis that correctly classifies almost as many examples as the optimal hypothesis in the function class. This problem is sometimes referred to as a ``Maximum Agreement'' problem  for $\calC$.  As we now describe, this problem has previously been well studied for the class
$\calC$ of halfspaces.

\medskip \noindent {\bf Related Work.}
The Maximum Agreement problem for halfspaces over $\R^n$ was shown to be NP-hard
to approximate within some constant factor in
\cite{amaldi-kann,bendavid}. The inapproximability factor was improved
to $84/85+\eps$ in \cite{BshBur02}, which showed that this hardness result applies even if the
examples must lie on the $n$-dimensional Boolean hypercube. Finally, a tight
inapproximability result was established independently in \cite{GR09journal}
and \cite{FGKP:journal}; these works showed that for any constant $\eps > 0$, it is NP-hard to find a halfspace consistent with $(\half+\eps)$ of the examples even if there exists a halfspace consistent with
$(1-\eps)$ of the examples.   (It is trivial to find a halfspace consistent with half of the
examples since either the constant-0 or constant-1 halfspace will suffice.)  The reduction in
\cite{FGKP:journal} produced examples with real-valued coordinates, whereas
the proof in \cite{GR09journal} yielded examples that lie on the Boolean hypercube.

Thanks to these results the Maximum Agreement problem is well-understood for halfspaces, but the situation is very different for low-degree PTFs.  Even for degree-$2$ PTFs no hardness results were  previously known,
and recent work \cite{DHK+:10} has in fact given efficient agnostic learning algorithms for low-degree PTFs under specific distributions on examples such as Gaussian distributions or the uniform distribution over $\{-1,1\}^n$  (though it should be noted that these distribution-specific agnostic learning algorithms for degree-$d$ PTFs
are not proper -- they output PTF hypotheses of degree $\gg d$).
In this paper we make the first progress on this problem, by establishing
strong hardness of approximation results for the Maximum Agreement problem for low-degree PTFs.  Our results directly imply corresponding hardness results for  agnostically learning low degree PTFs under arbitrary distributions; we present all these results below.

\ignore{We show that there is no better-than-trivial approximation algorithms for the problem which immediately implies that there is no-better-than trivial proper agnostic learning algorithm for the concept class of low degree PTFs.
In addition, for the special case of $d=2$, we strengthen our result by showing that finding the Maximum Agreement degree $2$ PTF is hard to approximate even if  the optimal hypothesis is a degree $1$ PTF (halfspace). }

\medskip
\noindent {\bf Main Results.}
\ignore{
Let us formally define PTFs before stating our main results:

\begin{definition}\label{def:ptf}
A degree-$d$ polynomial $p(x):\R^n \to \R$ is a function of the form
\[p(x)=\sum_{\text{multiset } S\subseteq[n], |S|\leq d} c_{S} \prod_{i\in S} x_i\] where each $c_S$ is a real coefficient.
A degree $d$ PTF is a Boolean-valued function $f: \R^n \to \{-1,1\}$, $f(x)=\sign(p(x))$ for
some degree-$d$ polynomia $p.$
\end{definition}

}
Our main results are the following two theorems. The first result establishes UGC-hardness of finding a nontrivial degree-$d$ PTF hypothesis even if some degree-$d$ PTF has almost perfect accuracy:
%(see Section~\ref{sec:pre} for a precise definition of the Unique Games Conjecture):

\begin{theorem}\label{thm:degreed}
Fix $\eps > 0$, $d \geq 1.$  Assuming the Unique Games Conjecture, no polynomial-time algorithm can find a degree-$d$ PTF that is consistent with $(\half + \eps)$ fraction of a given set of labeled examples in $\R^n \times \{-1,1\}$, even if there exists a degree-$d$ PTF that is consistent with a $1-\eps$ fraction of the examples.\end{theorem}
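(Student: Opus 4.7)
The plan is to prove the theorem via the standard dictator-vs.-quasirandom framework for Unique Games based hardness. Concretely, I will design a dictator test whose queries are labeled examples in $\{-1,1\}^n \times \{-1,1\}$, and then invoke the canonical Khot--Kindler--Mossel--O'Donnell / Raghavendra reduction to turn this test into a UGC-hardness reduction for the Maximum Agreement problem for degree-$d$ PTFs. The test is specified by a distribution $\distr$ on labeled examples such that, first, for every coordinate $i$ the dictator $h(x)=\sgn(x_i)$ (which is a degree-$1$, hence degree-$d$, PTF) agrees with the label $y$ on a $1-\eps$ fraction of $\distr$ (completeness); and second, for every degree-$d$ polynomial $p$ such that $\sgn(p)$ has all low-degree influences at most $\tau=\tau(\eps,d)$, the agreement of $\sgn(p)$ with $y$ under $\distr$ is at most $\half+\eps$ (soundness). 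Completeness is built into $\distr$ by defining $y$ to be a noisy copy of a random coordinate $x_i$, so that each dictator individually agrees with the label on a $1-\eps$ fraction after symmetrization over $i$.

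The heart of the proof is the soundness analysis, which I would carry out in three stages. First, apply the Mossel--O'Donnell--Oleszkiewicz invariance principle: since $\sgn(p)$ is a degree-$d$ PTF with small low-degree influences, its agreement with $y$ under the Boolean product measure equals its analogue in Gaussian space up to an additive error that tends to $0$ with $\tau$. Second, exploit rotational symmetry of the Gaussian measure together with Borell-type reductions to reduce the Gaussian-space agreement of $\sgn(p(\vec{G}))$ to the agreement of a degree-$d$ PTF in only a constant-dimensional Gaussian space. Third, bound the agreement in that constant-dimensional instance by $\half+\eps$: on inputs where $|p(\vec{G})|$ is not too small relative to $\|p\|_2$, the sign of $p$ is stable and can be shown to correlate only weakly with $y$ using the designed noise of $\distr$; on the complementary ``near-zero'' set, use Carbery--Wright anti-concentration, which gives $\Pr[|p(\vec{G})| \le \delta \|p\|_2] \le O(d\delta^{1/d})$, to argue that this event has small probability.

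The main obstacle is the third step for $d \ge 2$. For halfspaces ($d=1$), as in the Feldman--Gopalan--Khot--Ponnuswami and Guruswami--Raghavendra proofs, the Gaussian problem reduces to a linear threshold function and Gaussian noise-stability identities give a clean quantitative bound. For degree $d \ge 2$ there is no analogous closed-form stability expression, and the main technical work is to tune the noise and label structure in $\distr$ so that \emph{any} degree-$d$ polynomial of Gaussians either fails to correlate with $y$ beyond the trivial level or must lie in the small-measure anti-concentration regime. Once this soundness bound is established, plugging the resulting dictator test into the standard Raghavendra-style reduction yields, for every fixed $d \ge 1$ and $\eps > 0$, a polynomial-time reduction from Unique Games to the Maximum Agreement problem with the $(1-\eps)$-vs.-$(\half+\eps)$ gap claimed by the theorem.
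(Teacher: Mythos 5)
There is a genuine gap, and it sits exactly where the theorem's content lies. Your proposal defers the soundness argument for $d\ge 2$ (``the main technical work is to tune the noise and label structure\ldots'') — but that is the theorem, not a detail to be filled in later. Moreover, the framework you propose does not fit this problem. The \dictvsquasirandom/Raghavendra machinery presumes the reduction can supply and test a separate long code for each vertex; here the adversary outputs a \emph{single} degree-$d$ polynomial over the coordinates of all blocks, and the reduction cannot enforce any product or long-code structure on it. In particular, cross-terms between the $u$-block and the $v$-block are unavoidable and genuinely break naive tests: the paper notes that $f_e=(x_u^{(i)}-x_v^{(i)})\sum_j (x_u^{(j)})^2$ passes the halfspace test $\calT_1$ with high probability yet has $f_v\equiv 0$, so no label for $v$ can be decoded. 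The whole design of the degree-$d$ test (setting $y_i=a_ih_i+g_i^d+b\delta$, forcing a passing polynomial to look like $x_u^{(i)}-(x_v^{(i)})^d$) and the soundness analysis of Lemma~\ref{lem:soundness} — Carbery--Wright anti-concentration applied to the restrictions $f_a(g,h)$, followed by the monomial-identification argument of Claim~\ref{lem:expand} showing the cross-term weight is negligible and the surviving coefficients of $f_1,f_2$ match — exist precisely to solve this. Your plan never addresses cross-terms, nor how ``small low-degree influences of $\sgn(p)$'' would be converted into two small, \emph{intersecting} label sets for the endpoints of an edge, which is what the Unique Games decoding requires.

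Your first analytic step is also not valid as stated: the invariance principle transfers Boolean to Gaussian expectations only when the \emph{polynomial} $p$ is regular (all its influences small), which is not implied by small low-degree influences of the Boolean function $\sgn(p)$, and an adversarial degree-$d$ polynomial need not be regular; you would need a critical-index/regularity decomposition that the proposal does not supply, and for $d\ge 2$ no Borell-type closed-form stability bound is available afterwards, as you yourself note. The paper avoids all of this by never working on the hypercube: the examples are real-valued, generated directly from Gaussians, so soundness is proved by anti-concentration plus exact coefficient identification, and only at the end are the Gaussians discretized (Theorem~\ref{thm:closed}) so that the reduction is deterministic and polynomial time. As written, your proposal reproduces the easy outer shell (completeness, composition with Unique Games, an appeal to Carbery--Wright) while leaving open both the test construction and the decoding argument that make the theorem true.
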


The second result shows that it is NP-hard to find a degree-2 PTF hypothesis that has
nontrivial accuracy even if some halfspace has almost perfect accuracy:

\begin{theorem}\label{thm:degree2}
Fix $\eps > 0.$  It is $\NP$-hard to find a degree-2 PTF that is consistent with
$(\half + \eps)$ fraction of a given set of labeled examples in $\R^n \times \{-1,1\}$, even if
there exists a halfspace (degree-1 PTF) that is consistent with a $1 - \eps$ fraction of the
examples.
\end{theorem}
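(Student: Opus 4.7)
\textbf{Proof plan for Theorem \ref{thm:degree2}.}
The plan is a gap-preserving reduction from an NP-hard Label Cover instance to our Maximum Agreement problem, producing labeled examples in $\R^n\times\{-1,1\}$ such that (completeness) a satisfying Label Cover labeling yields a halfspace with agreement $1-\eps$, while (soundness) a Label Cover instance with small optimum admits no degree-$2$ PTF with agreement above $\half+\eps$. The completeness side can be inherited from the halfspace-vs-halfspace reduction behind \cite{GR09journal}: each Label Cover vertex $v$ contributes a block of $\pm 1$-valued coordinates; each Label Cover constraint becomes a family of labeled examples that test consistency across the corresponding projection; and the completeness witness is a halfspace read off the labeling. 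The real work is in strengthening the soundness to rule out degree-$2$ PTFs.

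The key structural trick is to enrich the example distribution (by adding sign-flip companions and imposing block-wise symmetries in sampling) so that $\Ex[x_i x_j]=0$ whenever coordinates $i,j$ lie in different blocks, and so that within each block the ``spurious'' quadratic interactions also average out. Once this is arranged, a standard Fourier-analytic argument shows that any degree-$2$ polynomial $p$ attaining agreement $\half+2\eps$ must have, in every block, an $O(1/\tau)$-sized list of coordinates whose degree-$2$ influence in $p$ exceeds a small threshold $\tau$. Picking a random coordinate from each such list yields a randomized Label Cover labeling, and invoking the projection smoothness of (Smooth) Label Cover shows this labeling satisfies more constraints than the NO-case guarantee allows, a contradiction.

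\textbf{Main obstacle.} The crux is designing the example distribution so that three constraints simultaneously hold: (i) the linear signal is strong enough that a single halfspace witnesses $1-\eps$ completeness; (ii) every bilinear expectation $\Ex[x_i x_j]$ between distinct blocks vanishes, so cross-block quadratic terms cannot contribute any systematic bias toward the correct labels; and (iii) within-block quadratic contributions feed into the sign of $p$ only through genuinely influential coordinates that the decoding procedure can identify. Preserving the linear signal while cancelling cross-term quadratic bias is delicate, and the folding/smoothness properties of Label Cover are essential for reconciling all three conditions. Making this analysis effective specifically at degree $2$, without appeal to UGC-style invariance as in Theorem \ref{thm:degreed}, is exactly what allows us to conclude NP-hardness.
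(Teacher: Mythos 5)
There is a genuine gap, and it sits exactly where the real difficulty of Theorem~\ref{thm:degree2} lies. Your plan reduces the soundness analysis to a second-moment condition: you arrange $\Ex[x_ix_j]=0$ across blocks (and that within-block quadratic interactions ``average out'') and then assert that ``a standard Fourier-analytic argument'' forces any degree-2 PTF with agreement $\half+2\eps$ to have influential coordinates that decode to a good labeling. But the agreement of $\sgn(p)$ with the labels is not a functional of the first and second moments of the example distribution; the sign is nonlinear, so killing the expectations $\Ex[x_ix_j]$ (or even $\Ex[b\,x_ix_j]$) does not prevent the quadratic part from helping. The paper's own overview makes this concrete: the natural one-query test that suffices for halfspaces already has soundness stuck at $3/4+\eps$ against degree-2 PTFs (the $\pm r$, $\pm\delta\omega$ symmetry only pins down the odd part on a small fraction of points), and degree-$d$ polynomials such as $(x_u^{(i)}-x_v^{(i)})\sum_i (x_u^{(i)})^2$ show how a hypothesis can pass a dictator-style test while its restrictions are undecodable. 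Pushing the soundness from $3/4$ down to $\half$ is precisely what required the paper's new mechanism: the test evaluates $f$ at $t^3 r + b\,t^2\delta\omega$ for a random scale $t=m^i$, $i$ uniform in $[(\log m)^2]$, and the soundness proof combines the resulting inequalities at five different scales, solves for the quadratic coefficients $C,D,E,\theta$ via Cramer's rule on a Vandermonde-type system, and derives a contradiction with Gaussian anti-concentration of the linear part. Nothing in your proposal plays the role of this multi-scale argument, and no known ``standard'' Fourier/influence argument substitutes for it at degree 2.

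A second missing ingredient is cross-vertex consistency without UGC. Even granting a per-block list of influential coordinates, you still must argue that the lists at the two endpoints of a Label Cover edge match under the projection $\pi_e$; an appeal to smoothness of (Smooth) Label Cover does not by itself create this matching, because your test never couples the two blocks. The paper handles this with the folding trick of \cite{GKS10journal,KS08}: examples are projected onto $H^\perp$, which forces any consistent degree-2 polynomial to satisfy $c_u^{(i)}=\sum_{j\in(\pi^e)^{-1}(i)}c_v^{(j)}$, so the test only needs to probe the $V$-side block and the decoding at $u$ (large $c_u^{(j)}$) automatically points into the decoded list at $v$. Also note that the paper's examples are real-valued (sparse Bernoulli-Gaussian coordinates, discretized at the end via a Berry--Ess\'een coupling), not hypercube points as in your plan; completeness is witnessed by the plain linear form $\sum_w x_w^{(l(w))}$. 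To repair your proposal you would need (i) a soundness mechanism for degree-2 hypotheses that goes beyond moment cancellation, e.g. the multi-scale inequality argument, and (ii) an explicit consistency device such as folding linking the blocks across each edge.
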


As noted above, both problems become easy (using linear programming) if the best hypothesis is assumed to have perfect agreement with the data set rather than agreement $1-\eps$, and it is trivial to find a (constant-valued) hypothesis with agreement rate $1/2$ for any data set.  Thus the parameters in both hardness results are essentially the best possible.

These results can be rephrased as hardness of agnostic learning results in the following way:  (i) Assuming the Unique Games Conjecture, even if there exists a degree-$d$ PTF that is consistent with $1-\eps$ fraction of the examples, there is no efficient \emph{proper} agnostic learning algorithm that can output a degree-$d$ PTF correctly labeling more than $\half + \eps$ fraction of the examples; (ii) Assuming $\Pp\ne \NP$, even if there exists a halfspace that is consistent with $1-\eps$  fraction of the examples, there is no efficient agnostic learning algorithm that can find a degree-$2$ PTF correctly labeling more than $\half+\eps$ fraction of the examples.

\medskip

\noindent {\bf Organization.}  In Section~\ref{sec:pre} we present the complexity-theoretic basis (the Unique Games conjecture and the NP-hardness of Label Cover) of our hardness results.  In Section~\ref{sec:overview} we sketch a new proof of the hardness of the Maximum Agreement problem for halfspaces, and give an overview of how the proofs of Theorems~\ref{thm:degreed} and~\ref{thm:degree2} build on this basic argument.
In Sections~\ref{sec:degreed} and~\ref{sec:degree2} we prove Theorems~\ref{thm:degreed} and~\ref{thm:degree2}.

\medskip

\noindent \textbf{Notational Preliminaries:} For $n \in \mathbb{Z}_+$ we denote by $[n]$ the set $\{1,\ldots,n\}$. For $i,j \in
\mathbb{Z}_+$, $i\leq j$, we denote by $[i,j]$ the set $\{i, i+1, \ldots, j\}$.  We write $\{j:m\}$ to denote the multi-set that contains $m$ copies of the element $j$.  We write $\chi_S(x)$ to denote $\prod_{i \in S} x_i$, the monomial corresponding to the multiset $S$.

\section{Complexity-theoretic preliminaries}\label{sec:pre}

\ignore{
A halfspace is a function defined on  $\R^n\to \bits$ that can be written into the  form of  $$\sgn(\sum_{i=1}^{n} w_i x_i + \theta).$$
Efficiently learning a halfspace function is a central problem in machine learning and statistics. It is known to be PAC learnable when there is no noise~\cite{valiant,BEHW-occam,littlestone,HSW,Rivest:87,Kearns:98} or when the noise is uniformly random~\cite{BFKV}.

However, under the framework of agnostic learning where the noise can be arbitrarily bad, no (general) efficient algorithm is known yet. On the hardness
side of this problem, in \cite{GR09journal,FGKP:journal} it is shown that (even) weakly \emph{proper}  learning of halfspaces is $\NP$-hard; on the algorithm side, in~\cite{KKMS} the authors gave an efficient agnostic learning algorithm with  the assumption that the marginal distribution on the examples is uniform (or satisfies some other constraints). Their output hypothesis is not a halfspace but a higher degree polynomial threshold function (PTF). A natural question to ask is whether any algorithm that outputs high degree PTF can efficiently learn halfspaces under general distribution.  Actually  a widely used learning model, Support Vector Machine with polynomial kernel,  outputs a high degree PTF.

In additional to the this,  for high degree polynomial threshold functions itself, no hardness/algorithm result regarding the learnability  is known yet to the best of our knowledge.

In this paper we address the special case of  this problem for degree $2$ polynomial threshold functions. We prove that agnostic learning of halfspaces by degree $2$ polynomial threshold functions is hard. The following is our
result, which follows from Theorem~\ref{thm:hd},~\ref{thm:cp} and~\ref{thm:sd}.
\ignore}

\ignore{
In the framework
of agnostic learning, learnability  is essentially equivalent to  finding the hypothesis from the concept class
that correctly labels the maximum number of pairs.
A equivalent statement of our result is: agnostically weak learning of halfspaces by degree 2 polynomial
threshold function is NP-hard unless $\NP = \RP$.

As an immediate corollary, our results implies that agnostically learning degree 2 polynomial threshold function is $\NP$-hard assuming the same complexity conjecture.
}

\ignore{now explain the complexity conjecture we are using to derive our hardness results.
Our first hardness result (Theorem~\ref{thm:degreed}) is based on a reduction from the Unique Games.}

We recall the Unique Games problem that was introduced by Khot \cite{Kho02}:

\begin{definition}
A Unique Games instance $\calL$ is defined by a tuple $(U,V,E,k,\Pi)$. Here $U$ and $V$ are the two vertex sets of a regular bipartite graph  and $E$ is the set of edges between $U$ and $V$.  $\Pi$ is a collection of bijections, one for each edge: $\Pi = \{\pi_{e}:[k] \to [k]\}_{e\in E}$ where
 each $\pi_e$ is a bijection on $[k].$  A \emph{labeling} $\ell$ is a function that maps
 $U \to [k]$ and $V \to [k]$.  We say that an edge $e = (u,v)$ is \emph{satisfied} by  labeling $\ell $ if $\pi_e(\ell(v)) = \ell(u)$.  We define the \emph{value} of the Unique Games
 instance $\calL$, denoted $\opt(\calL),$  to be the maximum fraction of edges that can be satisfied by any labeling.
\end{definition}

The Unique Games Conjecture (UGC) was proposed by Khot in \cite{Kho02} and has led to many improved hardness of approximation results over those which can be achieved assuming only $\Pp \neq \NP$:

\begin{conjecture} [{\bf Unique Games Conjecture}]~\footnote{We use the statement from~\cite{KKMO07} which is equivalent to the original Unique Games Conjecture.} \label{conj:ugc} Fix any constant $\eta>0$.   For sufficiently large $k = k(\eta)$, given a Unique Games instance $\calL=(U,V,E,k,\Pi)$ that is guaranteed to satisfy one of
the following two conditions, it is $\NP$-hard to determine which condition is satisfied:
$\opt(\calL) \geq 1 - \eta,$ or $\opt(\calL) \leq \frac{1}{k^{\eta}}.$
%
%\begin{itemize}
%\item $\opt(\calL) \geq 1-\eta$;
%\item $\opt(\calL) \leq \frac{1}{k^{\eta}}.$
%\end{itemize}
\end{conjecture}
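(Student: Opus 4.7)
The final statement is the Unique Games Conjecture of Khot, a central open problem whose resolution would unify a vast swath of tight hardness-of-approximation results. A proof would require a polynomial-time reduction from an $\NP$-hard problem such as $3\text{SAT}$ to the promise version described: produce an instance $\calL=(U,V,E,k,\Pi)$ whose value is either at least $1-\eta$ (YES) or at most $k^{-\eta}$ (NO). My plan is to construct this reduction in the PCP paradigm, i.e., to design a two-query verifier whose acceptance predicate on any queried pair of labels is a bijection on $[k]$ and whose completeness and soundness match the stated gap.

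The natural starting point is the $\NP$-hardness of Label Cover with projection ($k$-to-$1$) constraints, which follows from the PCP theorem combined with Raz's parallel repetition theorem and yields perfect completeness and soundness $k^{-\Omega(\eta)}$. I would first attempt ``unique-ification'' gadgets that replace each projection edge by a collection of bijections on a larger alphabet, glued together via long-code folding so that any high-value labeling of the new instance projects back to a high-value labeling of the old one. A complementary route would build on the 2-to-2 Games Theorem of Khot--Minzer--Safra--Dinur--Kindler, which establishes the analogous conjecture for 2-to-2 constraints but only at completeness $1/2-\eta$. There one would try to compose their Grassmann-graph-based soundness analysis with an alphabet-reduction gadget that refines each 2-to-2 constraint into bijective constraints while amplifying completeness back up to $1-\eta$.

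The main obstacle is structural and appears to be the reason UGC has resisted proof for two decades. Every natural unique-ification of Label Cover either collapses perfect completeness (a projection with no bijective section cannot be replaced by a bijection consistent with the planted labeling) or raises the soundness back above $k^{-\eta}$ (the gadget introduces spurious labelings satisfying too many constraints). Symmetrically, boosting 2-to-2 completeness from $1/2-\eta$ to $1-\eta$ seems to require small-set expansion on the Grassmann graph in precisely the regime where it is known to fail, since the linear/affine rigidity needed for subconstant soundness is incompatible with exactly bijective constraints under a near-perfectly-satisfying labeling. A successful proof therefore appears to demand a genuinely new PCP framework rather than a composition of known ones, and I would expect this to be the dominant difficulty of the attempt; accordingly, I would phrase any partial progress as a conditional reduction (for instance, from a weaker unique-like conjecture with relaxed parameters) rather than claiming an unconditional proof.
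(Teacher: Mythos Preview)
The statement you were asked to address is a \emph{conjecture}, not a theorem: the paper does not prove it and does not claim to. Conjecture~\ref{conj:ugc} is simply stated as the standing complexity-theoretic assumption under which Theorem~\ref{thm:degreed} is derived; the paper's contribution lies in the reduction \emph{from} Unique Games, not in any attempt to establish the conjecture itself. There is therefore no ``paper's own proof'' against which to compare your proposal.

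Your write-up correctly recognizes this. You identify the statement as Khot's open Unique Games Conjecture, sketch the two natural avenues (unique-ifying Label Cover; amplifying 2-to-2 Games), and explain the known structural obstructions to each. That is a fair summary of why the problem remains open, but it is not a proof and you do not claim it is. The only correction I would make is one of framing: rather than presenting this as a ``proof proposal'' with an ``obstacle,'' you should state plainly at the outset that the item is a conjecture assumed without proof in the paper, and that no proof is expected or possible with current techniques. Everything after your first paragraph is useful background but is orthogonal to the task of comparing against the paper, since the paper offers nothing to compare against.
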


Our first hardness result, Theorem~\ref{thm:degreed}, is proved under the the Unique Games Conjecture.
Our second hardness result, Theorem~\ref{thm:degree2}, uses only the assumption that $\Pp \neq\NP$; the proof employs a reduction from the Label Cover problem, defined below.
\begin{definition}
A Label Cover instance $\calL$ is defined by a tuple $(U,V,E,k,m,\Pi)$. Here $U$ and $V$ are the two vertex sets of a regular bipartite graph  and $E$ is the set of edges between $U$ and $V$.  $\Pi$ is a collection of ``projections'', one for each edge: $\Pi = \{\pi_{e}:[m] \to [k]\}_{e\in E}$ and  $m,k$ are positive integers.  A \emph{labeling} $\ell$ is a function that maps  $U \to [k]$ and
$V \to [m]$.  We say that an edge $e = (u,v)$ is \emph{satisfied} by  labeling $\ell $ if $\pi_e(\ell(v)) = \ell(u)$.  We define the \emph{value} of the Label Cover instance, denoted $\opt(\calL)$, to be the maximum fraction of edges that can be satisfied by any labeling.
\end{definition}

We use the following theorem \cite{Raz98} which establishes NP-hardness of a ``gap'' version of Label Cover:
\begin{theorem}\label{thm:labelcover}
Fix any constant $\eta > 0.$  Given a Label Cover instance
$\calL=(U,V,E,k,m,\Pi)$ that is guaranteed to satisfy one of
the following two conditions, it is $\NP$-hard to determine which condition is satisfied:
 $\opt(\calL) =1$, or $\opt(\calL) \leq 1/m^{\eta}$.
\end{theorem}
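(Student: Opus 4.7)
The plan is to obtain the claimed inapproximability by combining the PCP Theorem with Raz's Parallel Repetition Theorem. I would start from the PCP Theorem in the form asserting that there is a constant $\alpha<1$ for which it is NP-hard, given a 3SAT formula $\varphi$, to distinguish $\opt(\varphi)=1$ from $\opt(\varphi)\le \alpha$. Working only with this starting point keeps the reduction transparent and avoids relying on the PCP Theorem in its most technical statement.

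Next, I would convert such a hard gap instance of 3SAT into a Label Cover instance $\calL_0$ with a constant gap between completeness $1$ and some soundness $1-\delta$, $\delta>0$. The standard way to do this is via the canonical two-prover one-round game for 3SAT: the verifier picks a uniformly random clause $C$ of $\varphi$ (sent as a query to the $V$-prover) and a uniformly random variable $x$ appearing in $C$ (sent to the $U$-prover); the $V$-prover returns an assignment to the three variables of $C$, the $U$-prover returns an assignment to $x$, and the verifier accepts if $C$ is satisfied and the two answers agree on $x$. Encoding this as a Label Cover instance: $V$-vertices are clauses with label set of size $m_0=7$ (satisfying assignments to the three variables), $U$-vertices are variables with label set of size $k_0=2$, edges correspond to variable-in-clause incidences, and the projection $\pi_e$ on each edge maps a 3-bit assignment to the bit assigned to the chosen variable. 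A straightforward averaging argument converts a labeling satisfying more than $1-\delta$ of these edges into an assignment to $\varphi$ of value greater than $\alpha$ for $\delta = \delta(\alpha)$ small enough.

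Finally, I would amplify the constant gap to $1/m^\eta$ by applying Raz's Parallel Repetition Theorem to $\calL_0$. The $t$-fold parallel repetition $\calL_0^{\otimes t}$ is itself a Label Cover instance: $U$-vertices become $t$-tuples of original $U$-vertices with label set $[k_0]^t$ of size $k=k_0^t$, $V$-vertices become $t$-tuples with label set $[m_0]^t$ of size $m=m_0^t$, edges correspond to $t$-tuples of original edges, and the projection on a repeated edge is the coordinate-wise product of the original projections (hence still a projection). Completeness $1$ is trivially preserved, and Raz's theorem gives soundness $(1-\delta')^{\Omega(t)}$ for some $\delta'=\delta'(\delta)>0$. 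To achieve soundness $1/m^\eta = m_0^{-\eta t}$ it suffices to choose the repetition parameter $t=t(\eta)$ as a sufficiently large constant so that $(1-\delta')^{\Omega(t)}\le m_0^{-\eta t}$; this is a constant depending only on $\eta$, so the size of $\calL_0^{\otimes t}$ is polynomial in the size of $\varphi$. The main obstacle, and the whole reason the theorem requires more than the PCP Theorem alone, is Raz's parallel repetition bound itself: naive attempts (sequential repetition, simple independence arguments) fail because provers can correlate answers across coordinates, and the proof requires an information-theoretic analysis of conditional distributions on coordinates to extract a single-round strategy of value $1-\Omega(\delta')$ from any repeated strategy of value exceeding $(1-\delta')^{\Omega(t)}$.
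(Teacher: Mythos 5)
Your route (PCP theorem $\to$ clause--variable game $\to$ Raz parallel repetition) is exactly the standard derivation behind the paper's citation of \cite{Raz98} --- the paper itself gives no proof of this theorem --- but your final amplification step contains a genuine quantitative error. You claim that to achieve soundness $1/m^{\eta}=m_0^{-\eta t}$ it ``suffices to choose $t=t(\eta)$ sufficiently large so that $(1-\delta')^{\Omega(t)}\le m_0^{-\eta t}$.'' Both sides of that inequality are exponentials in $t$ with fixed bases, so increasing $t$ does not help at all: writing the parallel-repetition bound as $2^{-ct}$ with $c=c(\delta',m_0)$ a fixed constant, the requirement is $c\ge \eta\log_2 m_0$, a condition on $\eta$ alone, independent of $t$. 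Thus repetition of a fixed constant-gap base game yields soundness $m^{-\gamma}$ only for one fixed constant $\gamma>0$, not for every constant $\eta$. Moreover, no sharper analysis of your particular reduction can do better: on a NO instance of gap-3SAT the clause--variable game still has value at least $7/8$ (play a best assignment), so the $t$-fold repeated game has value at least $(7/8)^t=m^{-\log_7(8/7)}\approx m^{-0.07}$, which exceeds $1/m^{\eta}$ as soon as $\eta>\log_7(8/7)$. As written, your argument therefore establishes the theorem only for sufficiently small constant $\eta$.

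To be fair, that is also how the paper's statement has to be read: for any Label Cover instance a uniformly random labeling satisfies a $1/k$ fraction of the edges, so $\opt(\calL)\ge 1/k$, and since $k\le m$ in these constructions the condition $\opt(\calL)\le 1/m^{\eta}$ is unsatisfiable once $\eta>1$; what the paper actually uses (in Lemma~\ref{thm:sd}, where it only needs $1/m^{\eta}$ to be smaller than $1/(\log m)^3$ for large $m$) is just some fixed polynomially small soundness, which your reduction does deliver. So the fix is either to state and prove the fixed-exponent version (``there is a constant $\gamma>0$ such that distinguishing $\opt=1$ from $\opt\le 1/m^{\gamma}$ is NP-hard''), which suffices for the paper, or to invoke stronger low-error PCP machinery if you insist on an arbitrary exponent. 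Two smaller points: the paper's definition requires the bipartite constraint graph to be \emph{regular} (and its soundness argument uses regularity), while the clause--variable game is regular only on the clause side, so you should start from a bounded-occurrence/regular version of gap-3SAT before repeating; and you should note explicitly that completeness $1$ (not just $1-o(1)$) is preserved because the starting point is exact satisfiability, which your sketch does get right.
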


\section{Overview of our arguments} \label{sec:overview}

To illustrate the structure of our arguments, let us begin by sketching a proof of the following hardness result for the Maximum Agreement problem for halfspaces:

\begin{proposition}\label{prop:degree1}Assuming the Unique Games Conjecture,
 no polynomial-time algorithm can find a halfspace (degree-1 PTF) that is consistent with $(\half + \eps)$ fraction of a given set of labeled examples in $\R^n \times \{-1,1\}$, even if there exists a halfspace that is consistent with a $1-\eps$ fraction of the examples.
\end{proposition}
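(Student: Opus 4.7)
My plan is to reduce from the gap Unique Games problem of Conjecture~\ref{conj:ugc} to the halfspace max-agreement problem, following the ``long-code gadget $+$ dictator-vs-quasirandom test'' paradigm that has become standard in UGC-hardness proofs. Given a UG instance $\calL = (U, V, E, k, \Pi)$, the reduction outputs a (polynomial-size) set of labeled examples in $\R^N \times \{-1,+1\}$ with $N = k|V|$; I will think of the coordinates as partitioned into ``blocks'' $x_v \in \R^k$, one per vertex $v \in V$, with coordinates $x_v^{(i)}$ for $i \in [k]$.

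The sample is obtained by enumerating (via polynomially many draws, then discretizing) a Gaussian gadget distribution $\calD$ on labeled examples. A draw from $\calD$ is produced by picking a uniformly random edge $e=(u,v) \in E$, a uniformly random label $b \in \{-1,+1\}$, and a Gaussian vector $g \in \R^k$ carrying a small, carefully designed $b$-dependent bias, and then placing $g$ in block $v$, its $\pi_e$-reindexed copy in block $u$ (so $x_u^{(\pi_e(i))} = g_i$), and $0$ in every other block. The bias is engineered with two competing goals in mind: on the one hand, whenever a UG labeling $\ell$ satisfies the sampled edge, the dictator-style halfspace $h^\star_\ell(x) \defeq \sign\bigl(\sum_{v \in V} x_v^{\ell(v)}\bigr)$ must collapse, on this example, to $\sign(2\,g_{\ell(v)})$ and agree with $b$ with probability $1-o(1)$; on the other hand, the bias must be sufficiently ``label-aware'' that ``pooling'' halfspaces (like $\sign(\sum_{v,i} x_v^{(i)})$) get essentially no advantage over random guessing. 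Granting such a bias, completeness is immediate: if $\opt(\calL) \geq 1-\eta$ via labeling $\ell$, then $h^\star_\ell$ agrees with the sample on at least a $(1-\eta)(1-o(1)) \geq 1-\eps$ fraction of the examples.

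The main technical step is soundness. Suppose a halfspace $h(x) = \sign\bigl(\sum_v \langle w_v, x_v \rangle\bigr)$ achieves agreement strictly exceeding $\tfrac12+\eps$. On examples from a fixed edge $e=(u,v)$, $h$ collapses to $\sign(\langle w_e, g\rangle)$ for the combined weight vector $w_e \defeq w_v + w_u \circ \pi_e^{-1}$, so the per-edge agreement is a one-dimensional Gaussian test whose bias depends only on the projection of $w_e$ onto the bias direction. Using a halfspace-specific Gaussian regularity / noise-stability analysis --- morally the degree-$1$ specialisation of the Majority-Is-Stablest invariance machinery --- I will argue that if no single coordinate of $w_e$ carries an $\Omega(\eps)$-fraction of $\|w_e\|_2^2$, then the per-edge agreement is at most $\tfrac12 + o_\eps(1)$. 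Hence, if the total agreement exceeds $\tfrac12+\eps$, then for an $\Omega(\eps)$-fraction of edges $w_e$ must have an $\Omega(\eps)$-influential coordinate. A standard ``influence-decoding'' step --- label each vertex $v$ by a coordinate $i$ drawn with probability $\propto (w_v^{(i)})^2$ --- then produces a UG labeling satisfying an $\Omega(\poly(\eps)/k)$ fraction of edges, contradicting $\opt(\calL) \leq 1/k^\eta$ once $k$ is taken sufficiently large.

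The main obstacle I anticipate is twofold: first, designing the Gaussian bias so that it supplies near-$1$ completeness for the dictator-style halfspaces \emph{and} also denies any advantage to pooling halfspaces; second, making the halfspace noise-stability statement quantitatively sharp enough to convert ``no $\Omega(\eps)$-influential coordinate in $w_e$'' into a per-edge agreement bound of $\tfrac12 + o_\eps(1)$. Once this warm-up case is in place, the same scheme templates the main theorems: Theorem~\ref{thm:degreed} replaces the gadget with one that encodes $d$-wise correlations across vertices (so that the intended hypothesis is a degree-$d$ PTF with a $d$-linear weight structure) and invokes the full Gaussian Majority-Is-Stablest for degree-$d$ polynomials in the soundness step; Theorem~\ref{thm:degree2} replaces the UG bijections $\pi_e$ by Label Cover projections and strengthens the completeness side to require only a halfspace (rather than a degree-$2$ PTF) as the ``intended'' hypothesis.
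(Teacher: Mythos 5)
Your proposal follows the generic UG-hardness template but leaves unresolved exactly the point on which the whole proof turns, and the partial design you do commit to would not work. You take the intended hypothesis to be the \emph{sum} $\sgn(\sum_w x_w^{(\ell(w))})$ and place the \emph{same} Gaussian $g$ (reindexed by $\pi_e$) in both blocks, so on a satisfied edge the dictator value is $2g_{\ell(v)}$ plus your $b$-dependent bias. For that sign to equal $b$ with probability $1-o(1)$ the bias must dominate a standard Gaussian; but a bias of that magnitude in a fixed direction is picked up just as well by the pooling halfspace $\sgn(\sum_{w,i} x_w^{(i)})$, destroying soundness, while a small bias destroys completeness. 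You name this tension as ``the main obstacle I anticipate,'' but resolving it \emph{is} the proof. The paper's gadget resolves it with an antisymmetric matching structure: the $u$-block coordinate is $a_ih_i+g_i+b\delta$ and the $v$-block coordinate is $g_i$, and the intended hypothesis is the \emph{difference} $\sgn(x_u^{(i)}-x_v^{(i)})$, which cancels the shared Gaussian and leaves $a_ih_i+b\delta$ --- zero plus a tiny shift with probability $1-\eps$. Because the intended function cancels everything, the shift $\delta$ can be taken exponentially small ($\delta=2^{-k}$), and then the advantage of an arbitrary halfspace is exactly $\half\Pr_r[f_e(r)\in[-\delta,\delta)]$, which one-dimensional Gaussian anti-concentration makes negligible unless the weight vector has the matched, sparse structure $w_u^{(i)}\approx -w_v^{(i)}$. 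No Majority-Is-Stablest or noise-stability machinery is invoked at all; the soundness analysis is elementary anti-concentration conditioned on the $a$-bits.

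Your soundness and decoding step has a second, related gap. Even granting a statement of the form ``if no coordinate of the combined vector $w_e=w_v+w_u\circ\pi_e^{-1}$ is influential then the per-edge agreement is $\half+o(1)$,'' an influential coordinate of $w_e$ need not be influential in $w_u$ and $w_v$ \emph{separately} (e.g.\ all the weight could sit in $w_v$ alone), so independently sampling a label for each endpoint proportional to its own squared weights need not satisfy the edge. The paper gets the consistency for free from the same anti-concentration argument: near-vanishing variance forces $|w_u^{(i)}|\approx|w_v^{(i)}|$ coordinatewise, so the two decoded sets of large-weight coordinates must intersect. Finally, your claimed decoded fraction $\Omega(\mathrm{poly}(\eps)/k)$ is quantitatively insufficient: in Conjecture~\ref{conj:ugc} one must take $\eta$ small (for completeness $1-\eta\ge 1-\eps$), and then $\mathrm{poly}(\eps)/k\ll k^{-\eta}$ for large $k$, so no contradiction with $\opt(\calL)\le k^{-\eta}$ follows. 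You need a decoded fraction depending only on $\eps$ (the paper gets a constant, since each decoded set has constant size), not one decaying like $1/k$.
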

As mentioned above, the same  hardness result (based only on the assumption that $\Pp\ne \NP$)  has already been established in~\cite{FGKP:journal,GR09journal}; indeed, we do not claim Proposition~\ref{prop:degree1} as a new result.  However, the argument sketched below is different from (and, we believe, simpler than) the other proofs; it helps to illustrate how we eventually achieve the more general hardness results Theorems~\ref{thm:degreed} and~\ref{thm:degree2}.

\medskip

\noindent
{\bf Proof Sketch for Proposition~\ref{prop:degree1}:}
We describe a reduction that maps any instance $\calL$ of Unique Games to
a set of labeled examples with the following guarantee:  if $\Opt(\calL)$ is
very close to 1 then there is a halfspace that agrees with $1-\eps$
fraction of the examples,  while if $\Opt(\calL)$ is very close to 0 then
no halfspace agrees with more than
$\frac{1}{2}+\eps$ fraction of the examples.  A reduction of this sort directly
yields Proposition \ref{prop:degree1}.

Let $\calL=(U,V,E,k,\Pi)$ be a Unique Games instance.  Each example
generated by the reduction has $(|V|+|U|)  k$ coordinates, i.e. the examples lie in
$\R^{(|U|+|V|)  k}$.  The coordinates should be viewed as being grouped together
in the following way:  there is a block of $k$ coordinates for each vertex $w$ in
$U\cup V$.  We index the coordinates of $x \in \R^{(|U|+|V|)  k}$ as
$x = (x_{w}^{(i)})$ where ${w \in U\cup V}$ and $i\in [k]$.

Given any function $f:\R^{(|U|+|V|)  k}\to \{-1,1\}$ and vertex $w \in U \cup V$, we write $f_w$ to denote the restriction of $f$ to the $k$ coordinates $(x^{(i)}_w)_{i \in [k]}$ that is obtained by setting all other coordinates $(x^{(j)}_{w'})_{w' \neq w}$ to 0.  Similarly, for $e = \{u,v\}$ an edge in $U \times V$, we write $f_e$ for the restriction that fixes all coordinates $(x_{w'}^{(i)})_{w' \notin e}$ to $0$ and leaves the $2k$ coordinates $x^{(i)}_u, x^{(i)}_v$ unrestricted.

For every labeling $\ell : U\cup V \to [k]$ of the instance, there is a
corresponding halfspace  over $\R^{(|V|+|U|)  k}$
%given by
$$  \sgn(\sum_{u\in U} x_{u}^{(\ell(u))} - \sum_{v\in V} x_{v}^{(\ell(v))}).$$

Given a Unique Games instance $\calL$, the reduction constructs a distribution $\calD$ over labeled examples such that
if $\Opt(\calL)$ is almost 1 then the above halfspace has very high accuracy w.r.t. $\calD$, and any halfspace that has accuracy at least $\half + \eps$ yields a labeling that satisfies a constant fraction of edges in
$\calL.$
%Games, then the above corresponding halfspace has a good agreement rate.
%On the contrary, if any halfspace with
%$\frac{1}{2} + \eps$ agreement somehow implies a labelling of
%$l$ satisfying a constant fraction of the edges in $\calL$.
%The distribution $\calD$ has the following high-level structure:
A draw from $\calD$ is obtained by first selecting a uniform random edge $e =\{u,v\}$ from $E$, and then making a draw from $\calD_e$, where
${\calD_e}$ is a distribution over labeled examples that we describe below.

Fix an edge $e =(u,v)$.  For the sake of exposition, let
us assume the mapping $\pi^{e} \in \Pi$ associated with $e$ is the identity permutation, i.e. $\pi^e(i)=i$
for every $i \in [k].$  The distribution $\calD_e$ will have the following properties:

\begin{itemize}
        \item [(i)] For every $(y,b)$ in the support of $\calD_e$, all coordinates $y_w^{(i)}$ for every vertex $w \notin e$ are zero.

        \item [(ii)] For every label $i \in [k]$, the halfspace $\sign(x_{u}^{(i)} -x_{v}^{(i)})$ has accuracy
        $1-\eps$ w.r.t.$\calD_e.$
        \item [(iii)] If $\sign(f_e)$ is a halfspace that has accuracy at least
                $\frac{1}{2}+\eps$ w.r.t. $\cD_e$, then the functions $f_u,f_v$ can each be individually ``decoded'' to a ``small'' (constant-sized) set $S_u, S_v \subseteq [k]$ of labels such that $S_u \cap S_v \neq \emptyset$ (so a labeling that satisfies a nonnegligible fraction of edges in expectation can be obtained simply by choosing a random label from $S_w$ for each $w$ -- such a random choice will satisfy each edge's bijection with constant probability, so in expectation will satisfy a constant fraction of constraints).
%                then there exists a labelling strategy $L_f$ for each $w\in U\cup V$ solely based on $f_w$ such %that $L_f$ satisfies the edge $e$ with
%                non-negligible probability.
\end{itemize}

Let us explain item (iii) in more detail.  Since the distribution $\cD_e$ is supported on vectors $y$ that have the
$(y^{(i)}_w)_{w \notin e}$ coordinates all 0, the distribution $\cD_e$ only ``looks at'' the restriction $f_e$ of $f$, which is a halfspace on $\R^{2k}$.  Thus achieving (iii) can be viewed as solving a kind of property testing problem which may loosely be described as ``Matching dictator testing for halfspaces.''  To be more precise, what is required is a distribution $\cD_e$ over $2k$-dimensional labeled examples  and a ``decoding'' algorithm $A$ which takes as input a $k$-variable halfspace and outputs a set of coordinates.  Together these must have the following properties:
%looks at the restriction of $f$ on edge $e$ which can be viewed as a degree $1$ PTF on $\R^{2k}\to \R$, we  can %rephrase above requirement as a pure property testing problem.
%Given a degree $1$ polynomial function $f_e:\R^{2k}\to \R$,
%e need a randomized procedure of generating examples that has the following property:

%The procedure must satisfy:
\begin{itemize}
        \item (Completeness) If $f_e(x) = x_{u}^{(i)}-x_{v}^{(i)}$ then $\sign(f_e(y))=b$ with probability $1-\eps$ for $(y,b) \sim\cD_e$;
        \item (Soundness)  If $f_e$ is such that $\sign(f_e(y))=b$ with probability
        at least $1/2 + \eps$ for $(y,b)$ drawn from $\cD_e$,
        then the output sets $A(f_u)$, $A(f_v)$ of the decoding algorithm (when it is run on $f_u$ and $f_v$ respectively) are two small sets that intersect each other.
\end{itemize}
Testing problems of this general form are often referred to as
{\em Dictatorship Testing}; the design and analysis of such tests is a recurring theme in hardness of approximation.

We give a ``matching dictator test for halfspaces'' below.  More precisely, in the following figure we describe the distribution $\cD_e$ over examples (the decoding algorithm $A$ is
 described later).

\begin{center}
\fbox{
\parbox{3 in}{
\begin{center}
\textbf{$\calT_1$:  Matching Dictatorship Test for Halfspaces}
\end{center}

\noindent {\bf Input:}  A halfspace $f_e: \R^{2k} \to \R$.

\medskip

Set $\eps :=\frac{1}{\log k}$, $\delta:= 1/2^k$.

\begin{enumerate}
 \item  Generate independent 0/1 bits $a_1,a_2,\ldots,a_k$ each with $\E[a_i]=\eps.$
Generate $2k$ independent $N(0,1)$ Gaussian random variables: $h_1,h_2\ldots,h_k,g_1, g_2\ldots,g_k$.
Generate a random bit $b\in \{-1,1\}.$
%\item For $x\in \R^{2n}$ and each $i\in [n]$, set $x_i = (g_i)^d + a_i h_i$ and set each $x_{n+i}$ to be $g_i$.
%\item Set $u\in \bits^{2n}$ such that $u_i = 1$ for $1\leq i\leq n$ and %$u_i = 0$ for $n+1\leq i \leq 2n$.
\item Set $r = (a_1 h_1+g_1,\ldots, a_k h_k +g_k, g_1,\ldots,g_k)$  and $\omega = (1,\ldots,1,0,\ldots,0) \in \R^{2k}$ to be the vector whose first $k$ coordinates are 1 and last $k$ coordinates are 0.
\item Set $y =r+b\delta\omega.$ The result of a draw from $\cD_e$ is the
labeled example $(y,b)$.
\end{enumerate}

The test checks whether $\sgn(f_e(y))$ equals $b.$

}
}
\end{center}

It is useful to view the test in the following light:
Let us write
$f_e (x)$ as $\theta + \sum_{i=1}^k w_u^{(i)} x_u^{(i)} + \sum_{i=1}^k w_{v}^{(i)} x_{v}^{(i)}$,
and let us suppose that $ \sum_{i=1}^k |w_u^{(i)}| =1$ (as long as some $w_u^{(i)}$ is nonzero this is easily achieved by rescaling;  for this intuitive sketch we ignore the case that all $w_u^{(i)}$ are 0, which is not difficult to handle).  Then we have
$f_e(y) = f_{e}(r) +b \delta,$  and we may view the test as randomly choosing one of the two inequalities $f_{e}(r) -\delta < 0$, $f_{e}(r) -\delta > 0$ and checking that it holds.
%\begin{itemize}
%\item $f_{e}(r) -\delta < 0$;
%\item $f_{e}(r) +\delta \geq 0$
%\end{itemize}
Since at least one of these inequalities must hold for every $f_e$, the probability that $f_e$ passes the test is $\half + \half \Pr_r[f_{e}(r) \in [-\delta,\delta)]$.  This interpretation will be useful both for analyzing completeness and soundness of the test.

For completeness,  it is easy to see that the ``matching dictator'' function $f_e(x) = x_u^{(i)} - x_v^{(i)}$ has $f_e(r) = a_i h_i$ and thus $\Pr[f_e(r)=0] = 1-\eps,$ so this function indeed  passes the test with probability $1-\eps$.

The soundness analysis, which we now sketch, is more involved.
Let $f$ be such that $\Pr_r[f_{e}(r) \in [-\delta,\delta)] \geq 2 \eps.$  Since $f_{e}(r)=
\sum_i  (w_u^{(i)} + w_v^{(i)}) g_i + \sum w_u^{(i)} a_i h_i$ and $g_i,h_i$ are i.i.d. Gaussians, conditioned on a given outcome of the $a_i$-bits the value $f_e(r)$ follows the Gaussian distribution with mean 0 and variance $\sum (w_u^{(i)} + w_v^{(i)})^2 + \sum (a_i w_u^{(i)})^2$.  Now recall that an $N(0,\sigma)$
Gaussian random variable lands in the interval $[-t,t]$ with probability at most
 $O(t/\sigma)$.  So any $a$-vector for which the variance $\sum (w_u^{(i)} + w_v^{(i)})^2 + \sum (a_i w_u^{(i)})^2$ is not ``tiny'' can contribute only a negligible amount to the overall probability
 that $f_e(r)$ lies in $[-\delta,\delta)$ (recall that $\delta$ is extremely tiny).  Since by assumption $\Pr_r[f_e(r) \in [-\delta,\delta)]$ is non-negligible (at least $2 \eps$), there must be a non-negligible fraction of $a$-vector outcomes that
 make the variance $\sum (w_u^{(i)} + w_v^{(i)})^2 + \sum (a_i w_u^{(i)})^2$ be ``tiny.''
 This implies that there must be only a ``few'' coordinates $w^{(j)}_u$ for which $|w^{(j)}_u|$ is not tiny (for if there were many non-tiny $w^{(j)}_u$ coordinates, then $\sum_i (w_u^{(i)} a_i)^2$ would be non-tiny with probability nearly 1 over the choice of the $a$-vector).  Moreover, $w^{(i)}_u + w_{(i)}^v$
 must be $\approx 0$ for each $i$, so for each $i$ the magnitudes $|w^{(i)}_u|$ and $|w^{(i)}_v|$ must be nearly equal; and in particular, each $|w^{(i)}_u|$ is large if and only if
 $|w^{(i)}_v|$ is large.
 Finally, since $\sum_i |w^{(i)}_u|$ equals 1  some $w^{(i)}_u$'s must be large (at least $1/k$).

 With these facts in place, the appropriate decoding algorithm $A$ is rather obvious:  given $f_u =
 \theta + \sum_{i=1}^k w^{(i)}_u x^{(i)}_u$ as input, $A$ outputs the set $S_u$ of those coordinates $i$
 for which $|w^{(i)}_u|$ is large (and similarly for $f_v$).  This set cannot be too large since
   $\sum_{i=1}^k |w^{(i)}_u|$ equals 1.  Now a labeling that satisfies edge $e$
 with non-negligible probability can be obtained by outputing a random element from $S_u$
 and a random element from $S_v$; since these sets are small there is a non-negligible probability that the labels will match as required.
 This concludes the proof sketch of Proposition~\ref{prop:degree1}.\qed

\medskip

\noindent {\bf Overview of the proofs of Theorems~\ref{thm:degreed} and~\ref{thm:degree2}.}
For Theorem~\ref{thm:degreed} (hardness of properly learning degree-$d$ PTFs), we must deal with the additional complication of handling the cross-terms such as $x_u^{(i)} x_v^{(j)}$ between $u$-variables
and $v$-variables that may be present in degree-$d$ PTFs. As an example of how such cross-terms can cause problems, observe that the degree-$3$ polynomial $f_e = (x_u^{(i)} - x_v^{(i)}) \sum (x_u^{(i)})^2$ would pass the test $\calT_1$ with high probability, but this polynomial has $f_v = 0$ so there is no way to successfully ``decode'' a good label for $v$.  To get around this, we
modify the test $\calT_1$ to set $y =(a_1 h_1 + g_1^d + b\delta,a_2 h_2 + g_2^d + b\delta, \ldots, a_k h_k + g_k^d + b \delta, g_1,\ldots,g_k)$; intuitively this modified test checks whether the
polynomial $f_e$ is of the form $x_u^{(i)} - (x_v^{(i)})^d$.  The bulk of our work is in analyzing the soundness of this test; we show that any polynomial $f_e$ that passes the modified test with probability significantly better than $1/2$ must have almost no coefficient weight on cross-terms, and that in fact the restricted polynomials $f_u,f_v$ can each be decoded to a small set in such a way that there is a matching pair as desired.\ignore{

 \rnote{explain why
  this is the right/useful thing to do} to do this we modify the test $\calT_1$ to set $y =(a_1 h_1 + g_1^d + b\delta,a_2 h_2 + g_2^d + b\delta, \ldots, a_k h_k + g_k^d + b \delta, g_1,\ldots,g_k)$.}\ignore{and check (as before) that $\sgn(f_{e}(y))=b.$}\ignore{  Our analysis shows (among other things)
that any polynomial that passes this test with probability significantly better than $1/2$ must
have almost no coefficient weight on cross-terms.}  We give a complete description and analysis of our Dictator Test and prove Theorem~\ref{thm:degreed} in~Section~\ref{sec:degreed}.

For Theorem~\ref{thm:degree2}, a first observation is that the test $\calT_1$ in fact already has soundness $3/4+\eps$ for degree-$2$ PTFs. To see this, we begin by
writing the degree-2 polynomial $f_{e}(x)$ as $\theta + f_1(x) + f_2(x)$ where $f_1(x)$ is the linear (degree 1) part  and $f_2(x)$ is the quadratic (degree 2) part (note that $f_1$ is an odd function and $f_2$ is an even function).
We next observe that since any vector $r$ is generated with the same probability as $-r$, the test may be viewed as randomly selecting one of the following 4 inequalities to verify:
$f_{e}(r+ \delta \omega) > 0,$
$f_{e}(r - \delta \omega) < 0,$
$f_{e}(-r+\delta \omega) > 0,$
$f_{e}(-r-\delta \omega) <0.$
If all four inequalities hold, then combining $f_{e}(r+ \delta \omega) > 0$ with $f_{e}(-r -\delta \omega) <0$ we get that $f_1(r+\delta \omega) >0$ and combining $f_{e}(r-\delta \omega) < 0$ with $f_{e}(-r+\delta \omega)>0$ we get $f_1(r-\delta \omega)<0$.  Consequently, if a degree-$2$ polynomial $f_e$ passes the test with probability $3/4 + \eps$, then by an averaging argument, for at least an $\eps$ fraction of the $r$-outcomes all four of the inequalities must hold.  This  implies that for an $\eps$ fraction of the $r$'s we must have $f_1(r+\delta \omega) > 0$ and $f_1(r-\delta \omega)<0$, and so the degree-$1$ PFT $f_1$ must pass the Dictator Test $\calT_1$ with probability at least $1/2+\eps$.  This essentially reduces to the problem of testing degree-1 PTFs, whose analysis is sketched above.

To get the soundness down to $1/2$ more work has to be done.  Roughly speaking,  we modify the test by checking that $\sgn(f(k_1 r + k_2 \delta \omega)) = \sgn(k_2)$ for  $k_1,k_2$ generated from a  carefully constructed  distribution in which $k_1, k_2$ can assume many different possible orders of magnitude.  Using these many different possibilities for the magnitudes of $k_1,k_2,$ a careful analysis (based on carefully combining inequalities in a way that is similar to the previous paragraph, though significantly more complicated) shows that if a polynomial passes the test with probability $1/2 + \eps$ fraction then it can be ``decoded'' to a small set of coordinates.
In addition to this modification, to avoid using the Unique Games Conjecture we employ the ``folding trick'' that is proposed in~\cite{GKS10journal,KS08} to ensure consistency across different vertices. One benefit of using this trick is that with it, we only need to design a test on one vertex instead of an edge.\footnote{The reason that we can not use ``folding'' for our first result on low-degree PTFs, roughly speaking, is that such a folding does not seem able to handle cross-terms of degree greater than 2.}
%\rnote{Anyone have some more convincing prose for why we are unable to use %folding for the first result?}
The complete proof  of Theorem~\ref{thm:degree2} appears in~Section~\ref{sec:degree2}.

\section{Hardness of proper learning noisy degree-$d$ PTFs:  Proof of Theorem~\ref{thm:degreed}}\label{sec:degreed}

\subsection{Dictator Test}
Let $f:\R^{2n} \to \R$ be a $2n$-variable degree-$d$ polynomial over the reals. The key gadget in our UG--hardness reduction is
a \emph{dictator test} of whether $f$ is of the form $\sgn(x_i- x^d_{n+i})$ for some $i\in [n]$. More concretely, our dictator
test queries the value of $f$ on a {\em single} point $y \in \R^{2n}$ and decides to accept or reject based on the value
$\sgn(f(y))$.

\vspace{-0.5cm}

\begin{center}
\fbox{
\parbox{3 in}{
\begin{center}
\textbf{$\calT_d$: Matching Dictator Test for degree-$d$ PTFs}
\end{center}

{\bf Input:} A degree-$d$ real polynomial $f : \R^{2n} \to \R$.

\smallskip

Set $\beta := 1/\log n$ and $\delta := 2^{-n^2}$.

\begin{enumerate}
\item  Generate $n$ i.i.d. bits $a_i \in \{0,1\}$ with $\Pr[a_i = 1] = \beta$, $i \in [n]$.
Generate $2n$ i.i.d. $N(0,1)$ Gaussians $\{h_i, g_i\}_{i=1}^n$. Generate a uniform random bit $b\in \{-1,1\}.$
%\item For $x\in \R^{2n}$ and each $i\in [n]$, set $x_i = (g_i)^d + a_i h_i$ and set each $x_{n+i}$ to be $g_i$.
%\item Set $u\in \bits^{2n}$ such that $u_i = 1$ for $1\leq i\leq n$ and %$u_i = 0$ for $n+1\leq i \leq 2n$.
\item Set $y = (y_i)_{i=1}^{2n}$ where $y_i = a_ih_i+g_i^d+b\delta$ and $y_{n+i} = g_i$, $i \in [n]$.
\item Accept iff $\sgn(f(y))= b.$
\end{enumerate}
} }
\end{center}

\vspace{-0.1cm}

We can now state and prove the properties of our test. The completeness is straightforward.

\begin{lemma}[Completeness]  The polynomial $f(x) = x_i - x_{n+i}^d$ passes the test with probability at least $1-\beta$.
\end{lemma}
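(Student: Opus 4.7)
The plan is to substitute $f(x) = x_i - x_{n+i}^d$ directly into the expression for $y$ and observe that the cross-cancellation makes the analysis essentially trivial: the extraneous $g_i^d$ term from $y_i$ cancels exactly against $y_{n+i}^d = g_i^d$, leaving $f(y) = a_i h_i + b\delta$. Everything then reduces to analyzing the sign of $a_i h_i + b\delta$ as a function of the random bit $a_i$, the Gaussian $h_i$, and the label $b$.

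Next I would condition on the value of $a_i$. In the case $a_i = 0$, which occurs with probability $1-\beta$, we have $f(y) = b\delta$, and since $\delta > 0$ we immediately get $\sgn(f(y)) = b$, so the test accepts with probability $1$. In the case $a_i = 1$, which occurs with probability $\beta$, we have $f(y) = h_i + b\delta$; since $h_i$ is a continuous random variable independent of $b$, the event $\sgn(h_i + b\delta) = b$ holds with probability at least $1/2$ (in fact the contribution of $b\delta$ is negligible and by symmetry the probability is exactly $1/2$ up to a $\delta$-order term).

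Combining the two cases gives an overall acceptance probability of at least
\[
(1-\beta)\cdot 1 + \beta \cdot \tfrac{1}{2} \;=\; 1 - \tfrac{\beta}{2} \;\geq\; 1 - \beta,
\]
which is the claimed bound. I don't anticipate any real obstacle here: there are no concentration or anti-concentration estimates needed, no averaging over $b$ beyond the trivial symmetry, and the definition of the test is such that the ``matching dictator'' polynomial was designed precisely to make the $g_i^d$ terms cancel. The entire proof should be at most a few lines.
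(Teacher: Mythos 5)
Your proof is correct and follows essentially the same route as the paper: substitute $f(x)=x_i-x_{n+i}^d$ into $y$, note the exact cancellation of $g_i^d$ so that $f(y)=a_ih_i+b\delta$, and observe that on the event $a_i=0$ (probability $1-\beta$) the sign is $b$. Your additional analysis of the $a_i=1$ case is fine but unnecessary for the stated bound of $1-\beta$.
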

\begin{proof} Note that $f(y) = a_i h_i + b \delta$. Hence if $a_i=0$ we have $\sign (f(y)) = b$ and this happens with
probability $1-\beta$.
\end{proof}

To state the soundness lemma we need some more notation.
For a degree-$d$ polynomial $f(x)= \littlesum_{S \subseteq [n], |S|\leq d} c_S \cdot \chi_S(x)$ we denote $\weight(f) =
\littlesum_{S \neq \emptyset} |c_{S}|$. For $\theta>0$, we define $I_{\theta}(f):= \{ i \in [n] \mid \exists S \ni i \textrm{
s.t. } |c_{S}|\geq \theta \cdot \weight(f)/{n+d\choose d}\}.$ Note that for $\theta \in [0,1]$ we have that $I_{\theta}(f) \neq
\emptyset$, since there are ${n + d \choose d}$ nonempty monomials of degree at most $d$ over $x_1,\dots,x_n$.

Let $f:\R^{2n} \to \R$ be a $2n$-variable polynomial $f(x)= \littlesum_{S \subseteq [2n], |S|\leq d} c_S \cdot \chi_S(x)$ fed
as input to our test. We will consider the restrictions obtained from $f$ by setting the first (resp. second) half of the
variables to $0$. In particular, for $x = (x_1, \ldots, x_{2n})$ we shall denote $f_1(x_1, \ldots, x_n) = f(x_1, \ldots, x_n,
\mathbf{0}_n)$ and $f_2(x_{n+1}, \ldots, x_{2n}) = f(\mathbf{0}_n, x_{n+1}, \ldots, x_{2n})$.

We are now ready to state our soundness lemma. The proof of this lemma poses significant complications
and constitutes the bulk of the analysis in this section.

\begin{lemma}[Soundness]\label{lem:soundness} Suppose that  $f(x)= \littlesum_{S \subseteq [2n], |S|\leq d} c_S
\cdot \chi_S(x)$ passes the test with probability at least $1/2 + \beta $. Then  for $f_1, f_2$ as defined above, we have
$|I_{0.5}(f_{1})|\leq 1/\beta^2, |I_1(f_2)|\leq 1/\beta^2$. In addition, every $i\in [n]$ such that $n+i\in I_{1}(f_2)$ also
satisfies $i\in I_{0.5}(f_1)$.
\end{lemma}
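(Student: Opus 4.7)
My plan is three-fold: (a) translate the acceptance advantage $\beta$ into anti-concentration of $f(r)$; (b) condition on the Bernoulli bits $a$, apply Carbery--Wright, and extract algebraic constraints on the coefficients of $f$; and (c) combine these to obtain the cardinality bounds and the matching.

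For (a), write $y = r + b\delta\omega$ with $\omega = (1,\dots,1,0,\dots,0)\in\R^{2n}$, and let $L_r(t) := f(r+t\omega)$, a univariate polynomial of degree $\leq d$. Averaging over $b\in\{\pm 1\}$ gives
\[\Pr[\mathrm{accept}] = \tfrac12 + \tfrac12\bigl(\Pr_r[L_r(\delta)>0>L_r(-\delta)] - \Pr_r[L_r(-\delta)>0>L_r(\delta)]\bigr),\]
so advantage $\beta$ forces $\Pr_r[L_r(\delta)>0>L_r(-\delta)]\geq 2\beta$. A degree-$d$ polynomial changing sign on $[-\delta,\delta]$ requires $|f(r)|\leq d\delta\cdot\max_k|D_\omega^k f(r)/k!|$; WLOG normalizing $\weight(f)=1$ and using $\|r\|\leq\mathrm{poly}(n)$ with probability $1-o(\beta)$, together with $\delta = 2^{-n^2}$, we obtain $\Pr_r[|f(r)|\leq 2^{-n^2/2}]\geq\beta$. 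Conditioning on $a$, $f(r)$ is a polynomial of degree $\leq d^2$ in the Gaussians $(g,h)$, so Carbery--Wright produces a set $A\subseteq\{0,1\}^n$ with $\Pr[a\in A]\geq\beta/2$ on which $\|f(r)\|_{2,a}^2 \leq 2^{-\Omega(n^2)}$.

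For (b), write $f(x,y)=\sum_{S,T}c_{S,T}\chi_S(x)\chi_T(y)$ and substitute $x_i = g_i^d + a_ih_i$. Binomial expansion in $h$ decomposes $f(r) = \sum_{J\subseteq[n]} a^J\cdot Q_J(g,h_J)$, where each $Q_J$ is a polynomial in $g$ and $(h_j)_{j\in J}$ whose coefficients are explicit linear combinations of the $c_{S,T}$. Taking squared $L^2$-expectation and using orthogonality of products of Hermite polynomials in $(g,h)$,
\[\|f(r)\|_{2,a}^2 = \sum_J a^J\cdot \E_{g,h_J}[Q_J^2]\]
is a nonnegative multilinear polynomial in $a$ of degree $\leq d$. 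The $J=\emptyset$ component is active for every $a\in A$, yielding $\E_g[f(g^d,g)^2]\leq 2^{-\Omega(n^2)}$; expanding $f(g^d,g)$ in the Hermite basis in $g$ and equating coefficients produces, for each multiset $K$ on $[n]$, a linear identity $\sum_{(S,T):\,dS+T = K} c_{S,T} \approx 0$. In particular, taking $K$ equal to the ``$d$-th power of $S$'' yields $c_{S,\emptyset}\approx -c_{\emptyset,T^*(S)}$ modulo contributions of mixed coefficients $c_{S',T'}$ with both $S',T'$ non-empty; analogous identities for non-empty $J$ (derived by restricting to subsets $A_J := \{a\in A:a^J = 1\}\subseteq A$ with positive measure) then bound the mixed coefficients to $2^{-\Omega(n^2)}$ as well.

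Stage (c) is a direct accounting: since $\weight(f_1)+\weight(f_2)\leq\weight(f)=1$, mixed terms are negligible, and the matching identity $c_{S,\emptyset}\approx -c_{\emptyset,T^*(S)}$ holds for every large $S$, the influential coordinates of $f_1$ and $f_2$ come in matched pairs. Together with the threshold $\theta/\binom{n+d}{d}$ in the definition of $I_\theta$ and the Carbery--Wright slack constants, this yields $|I_{0.5}(f_1)|,|I_1(f_2)|\leq 1/\beta^2$ and the containment $n+i\in I_1(f_2)\Rightarrow i\in I_{0.5}(f_1)$. The principal obstacle is the per-$J$ extraction in stage (b): showing that the subsets $A_J$ have positive measure for each relevant $J$ (i.e., that $A$ is ``well-spread'' against small coordinate subsets) and then quantitatively controlling the per-coefficient error so that non-matching coefficients are pushed below the influence threshold $\Theta(1/\binom{n+d}{d})$.
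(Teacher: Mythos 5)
Your stage (a) is sound and is essentially the paper's first step (the paper fixes a single minimizer $a'$ with $\|f_{a'}\|_2$ tiny; your positive-measure set $A$ is a harmless strengthening, and the sign-change/anti-concentration translation of the advantage $\beta$ is the same). The gaps are in stages (b) and (c). In (b), the identity $\|f_a\|_2^2=\sum_J a^J\,\Ex[Q_J^2]$ is not correct as stated, since $S$ is a multiset and $Q_J$ can contain even powers of $h_j$, so $\Ex[Q_JQ_{J'}]\neq 0$ in general; this is repairable, but the real problem is the per-$J$ extraction you yourself flag: $A$ has measure only about $\beta/2=\Theta(1/\log n)$ while $\{a: a_j=1\}$ has measure $\beta$, so $A_J$ can be empty (e.g.\ if $f$ has a huge coefficient on $x_j$, every $a\in A$ has $a_j=0$), and no ``well-spreadness'' of $A$ can be established in general. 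Fortunately the detour is unnecessary: setting $h=0$ kills \emph{no} monomial of $f$ (each $x_i$ becomes $g_i^d$, each $x_{n+i}$ becomes $g_i$), so your $J=\emptyset$ identity $\Ex_g[f(g^d,g)^2]\approx 0$ already constrains every coefficient, including the mixed ones. What is missing is the collision analysis (the paper's Claim~\ref{lem:expand}): the substitution $x_i\mapsto g_i^d$, $x_{n+i}\mapsto g_i$ is injective on multisets of size at most $d$ except for the single collision $\{i\}$ versus $\{n+i:d\}$. With that, each of your linear identities $\sum_{dS+T=K}c_{S,T}\approx 0$ has at most two terms, which is exactly what forces mixed and non-matching coefficients to be negligible and $c_{\{i\}}\approx -c_{\{n+i:d\}}$; without it, ``$\sum\approx 0$'' says nothing about individual coefficients.

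In (c), ``direct accounting'' cannot yield $1/\beta^2$: after the matching structure is established, each $i\in I_{0.5}(f_1)$ only gives a singleton coefficient $|c_{\{i\}}|\gtrsim \weight(f_1)/{n+d\choose d}$, and since $\weight(f_1)\approx 1/2$ this caps $|I_{0.5}(f_1)|$ merely by $O\big({n+d\choose d}\big)$, vastly weaker than $1/\beta^2=\log^2 n$. The actual bound needs a second probabilistic argument over $a$, which your proposal omits: if $|I_{0.5}(f_1)|>1/\beta^2$, then with probability at least $1-(1-\beta)^{1/\beta^2}\geq 1-1/n$ some influential $i$ has $a_i=1$; the pure $h_i$-linear coefficient of $f_a$ is then $c_{\{i\}}$ (only $S=\{i\}$ contributes a monomial $h_i$ with no $g$-factors), so Fact~\ref{lem:btvar} gives $\|f_a\|_2\geq n^{-O(d^2)}$, and Theorem~\ref{thm:carberywright} makes $\Pr[|f_a|\leq 2^{-n}]=o(\beta)$, contradicting the $\beta$-mass of tiny values from stage (a) (equivalently, contradicting that your set $A$ has measure $\beta/2$). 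You would also need to make explicit the quantitative split $\weight(f_1),\weight(f_2)\in[0.49,0.51]$, which is what turns the matching into the stated containment $n+i\in I_1(f_2)\Rightarrow i\in I_{0.5}(f_1)$ and gives $|I_1(f_2)|\leq |I_{0.5}(f_1)|$.
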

\begin{proof}
We can assume that $\weight(f) > 0$, since otherwise $f$ is a constant function, hence passes the test with probability exactly
$\half.$ Since our test is invariant under scaling, we can further assume that $\weight(f)=1$.

Let $x \in \R^{2n}$. By definition, $f_1(x)= \littlesum_{S\subseteq [n]} c_{S} \cdot \chi_S(x)$ and $f_2(x) =
\littlesum_{S\subseteq [n+1,2n]} c_{S} \cdot \chi_S(x).$ We can write
$$f(x) = f_1(x) + f_2(x) + f_{12}(x)$$ where $f_{12}(x) = \littlesum_{S\subseteq [2n], S \cap[n] \neq \emptyset, S\cap
[n+1,2n]\neq \emptyset} c_{S} \cdot \chi_S(x).$

\smallskip

Let us start by giving a very brief overview of the argument. The proof proceeds by carefully analyzing the structure of the
coefficients $c_S$ for the subfunctions $f_1, f_2, f_{12}$. In particular, we show that the total weight of the cross terms
(i.e. $\weight(f_{12})$) is negligible, and that the weight of $f$ is roughly equally spread among $f_1$ and $f_2$. Moreover,
the coefficients of $f_1, f_2$ are either themselves negligible or ``matching'' (see inequalities (i)-(iv) below). Once these
facts have been established, it is not hard to complete the proof.

The main step towards achieving this goal is to relate the coefficients $c_S$ with the coefficients of an appropriately chosen
restriction of $f$, obtained by carefully choosing an appropriate value of $a\in\bn$. We start with the following crucial claim:

\begin{claim} \label{claim:small-norm} Suppose $f$ passes the test with probability at least $1/2+\beta$.
Then there exists $\alpha' \in \bn$ such that
\vspace{-0.2cm}
$$\| f_{\alpha'}\|_2 \leq 2^{-n}\cdot\log^{d^2} n.$$
\end{claim}

\begin{proof}[Proof of Claim~\ref{claim:small-norm}]
Let us start be giving an equivalent description of the test. Denote $\omega =(\mathbf{1}_n,\mathbf{0}_n)\in \R^{2n}$, $r =
(r_i)_{i=1}^{2n}$ with $r_i = a_i h_i + g_i^d$ and $r_{n+i} = g_i$, $i \in [n]$. Note that $y = r+ (b \delta)\omega$. Then the
Dictator Test $\calT_d$ is as follows:

\vspace{-0.2cm}
\begin{itemize}
\item Generate $r$, and with probability $1/2$, test whether $f(r+\delta \omega) \geq 0$; otherwise test $f(r-\delta \omega)<0$.
\end{itemize}
\vspace{-0.2cm}

\noindent Hence, since $f$ passes with probability $1/2+\beta$, with probability at least $2\beta$ over the choice of $r$, the
following inequalities are simultaneously satisfied: \vspace{-0.2cm}
$$ f(r+\delta \omega) \geq 0 ; f(r-\delta \omega) < 0.$$
We now upper bound $|f(r + \delta \omega) -f(r)|$: \vspace{-0.2cm}
\begin{multline*}\label{eqn:tob}
    \big|f(r + \delta \omega) -f(r)\big| \\=
| \littlesum_{|S|\leq d} c_S \cdot \Big(\littleprod_{i \in S\cap[n]} (r_i + \delta) \\ \cdot \littleprod_{j\in S\cap[n+1,2n]}{r_j} - \littleprod_{i\in S}{r_i}  \Big) | \\
\leq  \littlesum_{1\leq |S|\leq d} |c_{S}| \cdot \Big( \littlesum_{\emptyset \neq T \subseteq S\cap[n]}  \delta^{|T|} \cdot
\littleprod_{i \in S\setminus T} |r_i| \Big) \\ \leq \littlesum_{1\leq|S|\leq d} |c_S| \cdot 2^{|S|} \cdot  \Big(\delta \cdot
\littleprod_{i\in S:r_i \geq 1} |r_i|\Big)
\end{multline*}

\vspace{-0.2cm}

The last inequality follows from the fact that there are at most $2^{|S|}$ terms in the second summation each bounded from
above by $\delta \cdot \littleprod_{i\in S:r_i \geq 1} |r_i|.$

We now claim that with probability at least $1-n^{-1}$ over the choice of $r$ it holds $M:=\max_{i\in [2n]} |r_i| \leq \log^d
n$. To see this note that if $\max_{i \in [n]} \{|g_i|, |h_i|\} \leq c$ then $M\leq 2c^d$. Now recall that for $g\sim N(0,1)$
and $c>2$ we have $\Pr[|g| > c] \leq e^{-c^2/2}$. The claim follows by fixing $c= \Theta (\log^{1/2} n)$ and taking a union
bound over the corresponding $2n$ events.

Therefore, with probability $1-n^{-1}$ over the choice of $r$, we have \vspace{-0.2cm}
\[
  \big|f(r + \delta \omega) -f(r)\big| \leq \delta \cdot 2^d \cdot (\log n)^{d^2} \cdot \weight(f) \leq 2^{-n}.
\]

\vspace{-0.2cm} \noindent Analogously we obtain that $|f(r)-f(r-\delta \omega)| \leq 2^{-n}.$ We conclude that with probability
$2\beta-n^{-1} \geq \beta$ over $r$
\begin{equation} \label{eqn:smallb}
\big|f(r)\big| \leq 2^{-n}.
\end{equation}
Recall that $r$ is a random vector that depends on $a, g,h$. For every realization of $a \in \bn$, we denote the corresponding
restriction of $f$ as $f_a(g,h)$; note that $f_a(g,h)$ is a degree $d^2$ real polynomial over Gaussian random variables. Let us
denote $\|f_a\|_2 := \Ex_{g,h}[f_a(g,h)^2]^{1/2}$.

At this point we appeal to an analytic fact from~\cite{CW01}: low degree polynomials over independent Gaussian inputs have
good anti-concentration. In particular, an application of Theorem~\ref{thm:carberywright} for $f_a(g,h)$ yields that {for
all} $a \in \bn$ it holds
$$\Pr_{g,h}[|f_a(g,h)| \leq 2^{-n}] \leq d^2 \cdot (2^{-n}/\|f_a\|_2)^{1/d^2}.$$
Combined with (\ref{eqn:smallb}) this gives \vspace{-0.2cm}
\begin{eqnarray*}
\beta &\leq& \Pr_{a,g,h}\left[|f_a(g,h)| \leq 1/2^n\right]\\ 
&\leq& \Ex_{a}\left[d^2 \cdot (2^{-n}/\|f_a\|_2)^{1/d^2}\right].
\end{eqnarray*}

\noindent Now let us fix $a':= \arg \min _{a\in \bn} \|f_a\|_2$; the above relation implies
$\big(2^{-n}/\|f_{a'}\|_2\big)^{1/d^2} \geq \beta$ or $\|f_{a'}\|_2 \leq 2^{-n}(1/\beta)^{d^2}$ as desired. This completes the
proof of Claim~\ref{claim:small-norm}.
\end{proof}

\noindent Since $a'$ is fixed, we can express $f_{a'}$ as a degree-$d^2$ polynomial over the $g_i$'s and $h_i$'s. Let us write
$$f_{a'} = \littlesum_{T, T'} {w_{T,T'} \cdot \littleprod_{i\in T} {g_i} \cdot \littleprod_{i\in T'}{h_i}}$$
where $T, T' \subseteq [n]$ are multi-sets satisfying $|T|+|T'| \leq d^2$ and $w_{T,T'} = w_{T,T'}(a')$. Since $f_{a'}$ has
small variance, intuitively each of its coefficients should also be small. The following simple fact establishes such a
relationship:

\begin{fact}\label{lem:btvar}
Let $f:\R^l \to R$ be a degree-$d$ polynomial $f(x) = \sum_{|S|\leq d} c_{S} \cdot \chi_S(x)$ and $\mathcal{G} \sim N(0,1)^l$.
For all $T\subseteq [l]$ we have $\|f(\mathcal{G})\|_2 \geq d^{-d} \cdot |c_T|  / {l+d\choose d}$.
\end{fact}

\begin{proof}[Proof of Fact~\ref{lem:btvar}]

The fact follows by expressing $f$ in an appropriate orthonormal basis. Let $\{H_S\}_{S\subseteq [l], |S| \leq d}$ be the set of Hermite polynomials of degree at most $d$ over $l$ variables, let and $f(x) = \sum_{|S|\leq d} \hat{f}(S) H_S(x)$ be the
Hermite expansion of $f$. Then, $\|f(\mathcal{G})\|_2^2 = \littlesum \hat{f}(S)^2$ which clearly implies that
$\|f(\mathcal{G})\|_2 \geq \max_S |\hat{f}(S)|$.

Fix an $S \subseteq [l]$ with $|S|\leq d$. By basic properties of the Hermite polynomials (see e.g. \cite{Janson:97}) we have that $H_S(x) = \sum_{U
\subseteq S} h_S^U \cdot \chi_{U}(x)$ with $|h_S^U| \leq d^d$. Hence, for a fixed $T\subseteq[l]$, $c_{T}$ can be written as
$\sum_{S \supseteq T} h_{S}^T \hat{f}(S)$. Since $S \subseteq[l]$ and $|S| \leq d$, there are at most ${l+d\choose d}$ terms in
the summation. Therefore, it must be the case that there exists some $S$ such that $|\hat{f}(S)| \geq d^{-d} \cdot |c_T| /
{l+d\choose d}$. This completes the proof.
\end{proof}

\noindent {\em Notation:} For the remaining of this proof we will be interested in the coefficients $w_{T,T'}$ for $T' =
\emptyset$. For notational convenience we shall denote $w_{T} := w_{T, \emptyset}$.

We now claim that {for all $T$} we have
\begin{equation} \label{eqn:smallw}
|w_{T}|\leq  n^{-10d}.
\end{equation}
Using Fact~\ref{lem:btvar}, if this were not the case we would get a contradiction with Claim~\ref{claim:small-norm}.

At this point we establish the relationship between the $w_T$'s and the coefficients $c_S$ of $f$ in our original basis
$\{\chi_S\}$.
%(We note that such a relationship would hold true for any value of $a$, hence also for $a'$.)

By definition, the restriction obtained from $f_{a'}(g,h)$ by setting the $h_i$ variables to $0$ is identical to the
function $f(g_1^d,\ldots,g_n^d, g_1,\ldots,g_n)$. Therefore we have

\begin{eqnarray} 
&& \littlesum_{T\subseteq [n]} {w_{T}\cdot \littleprod_{i\in T} g_i} = \label{eq:w-vs-c} \\
&&
\littlesum_{S\subseteq [2n]} {c_{S} \cdot {\littleprod_{i
\in S\cap[n]} g_i^d} \cdot {\littleprod_{(n+i)\in S} g_i}} \nonumber
\end{eqnarray}

For any fixed $T$ in the LHS of $(\ref{eq:w-vs-c})$ there is an equivalence class of sets $S$ in the RHS such that the monomial
${\littleprod_{i \in S\cap[n]} g_i^d} \cdot {\littleprod_{(n+i)\in S} g_i}$ equals $\littleprod_{i\in T} g_i$. It is clear that
$w_T$ equals $\littlesum_S c_S$, where the sum is over all $S$ in the equivalence class. In fact, the structure of the equivalence classes is
quite simple, as established by the following claim:

\begin{claim}\label{lem:expand}
For any $S_0 \neq S_1\subseteq [2n]$ of size at most $d$, if
\begin{multline}\label{eqn:equal}
\littleprod_{i \in S_0 \cap [n]} {g_i^d} \cdot \littleprod_{n+j\in S_0,j\in[n]}{g_j}\\ = \littleprod_{i\in S_1\cap[n]} g_i^d \cdot
\littleprod_{n+j\in S_1, j \in [n]} g_j,
\end{multline}
then  there exists some  $\ell \in [n]$ such that $S_0=\{\ell\}$ and $S_1=\{n+\ell:d\}$ or vice versa.
\end{claim}

\noindent {\emph Proof of Claim~\ref{lem:expand}.}
Consider the following two complementary cases.
\begin{itemize}
\item $S_0\cap[n]\neq S_1\cap [n]$. Without loss of generality, we can assume that there is some $\ell \in S_0\cap[n]$ with
$\ell \notin S_1$. (Otherwise the role of $S_0, S_1$ can be reversed.) Then to make \eqref{eqn:equal} hold, it must be the case
that $S_1$ contains $d$ copies of $n+\ell$. Now, since $|S_1|\leq d$, it can only be the case that $S_1=\{n+\ell:d\}$, which
implies that $S_0  = \{\ell\}$.

\item  $S_0\cap [n+1,2n] \neq S_1 \cap [n+1,2n]$. We may assume that there is some $\ell \in [n]$ such that $(n+\ell)\in S_0$.
Then, for \eqref{eqn:equal} to hold, it must be the case that $\ell \in S_1$. Hence, it must be the case that $S_1 =
\{n+\ell:d\}$ (since $g_{\ell}$ is raised to the $d$th power in the RHS of \eqref{eqn:equal}); this in turns enforces $S_0 =
\{\ell\}$. \qed
\end{itemize}
%\end{proof}

Claim~\ref{lem:expand} implies the following relation between the coefficients $c_S$ and $w_{T}$:
\begin{itemize}
\item[(A)] If $T = \{i:d\}$, for some $i\in [n]$, then we have
$w_T = c_{S_1} +  c_{S_2}$ with $S_1 = \{i\}$ and $S_2 = \{n+i:d\}$.

\item[(B)] If $T$ is not of the above form, then there exists a multi-set $S\subseteq[2n]$, $|S|\leq d$,
where $S \neq \{i\}$ and $S \neq \{n+i:d\}$ for any $i\in [n]$, such that $T$ equals $\{i:d \mid i \in S \} \cup \{i \mid
n+i\in S\}$. In this case, we have $w_{T} = c_{S}.$
\end{itemize}

We are now ready to establish the desired bounds on the coefficients of the subfunctions $f_1, f_2, f_{12}$.
\begin{itemize}
\item[(i)] For all $S\subseteq[n]$ with $|S|\geq 2$, (\ref{eqn:smallw}) and (B) yield
%\begin{equation} \label{eqn:bigset}
$|c_S| \leq n^{-10d}.$
%\end{equation}

\item[(ii)]  For all $S\subseteq[n+1,2n]$ with $S\neq \{n+i:d\}$ for some $i\in[n]$, (\ref{eqn:smallw}) and (B) yield
%\begin{equation} \label{eqn:bigdd}
$|c_S| \leq n^{-10d}.$
%\end{equation}

\item[(iii)] For all $i\in [n]$,  by (\ref{eqn:smallw}) and (A) we obtain
%\begin{equation}\label{eqn:match}
$\big||c_{\{i\}}| -|c_{\{n+i:d\}}| \big|\leq       |c_{\{i\}} + c_{\{n+i:d\}}|\leq n^{-10d}.$
%\end{equation}

\item[(iv)] For all $S$ such that $S\cap[n]\neq \emptyset$ and $S\cap[n+1,2n] \neq \emptyset$,
(\ref{eqn:smallw}) and (B) yield
%\begin{equation} \label{eqn:12small}
$|c_{S}| \leq n^{-10d}.$
%\end{equation}
\end{itemize}
\noindent Since the coefficients of $f_1, f_2$ are either very small
(cases (i), (ii) above) or matching (case (iii)), we get
%\begin{equation}\label{eqn:plus}
$\big| \weight(f_1) - \weight(f_2) \big| \leq n^{-10d} \cdot {n+d\choose d}  \leq n^{-1}.$
%\end{equation}
Moreover, since every coefficient of $f_{12}$ is small (case (iv)), we deduce that
$\weight(f_{12})\leq n^{-10d} \cdot {2n+d\choose d} \leq n^{-1}.$
Recalling that $\weight(f_1)+\weight(f_2)+\weight(f_{12})=\weight(f) =1$, we get
%\begin{equation}\label{eqn:minus}
$\weight(f_1) + \weight(f_2) \geq 1-\frac{1}{n}.$
%\end{equation}
Combining these bounds, we get that %\eqref{eqn:plus} and \eqref{eqn:minus} gives that for $n\geq 10$,
\begin{equation} \label{eqn:weight-final}
0.51\geq \weight(f_1), \weight(f_2)\geq 0.49.
\end{equation}

Now fix an $i\in [n]$ with $(n+i) \in I_1(f_2)$. The above inequality implies that there must exist some $S\ni(n+i)$ such that
$|c_S| \geq 0.49/{n+d\choose d}$. By (ii)\ignore{\eqref{eqn:bigdd}}, we deduce that it can only be the case that $S$ equals $\{n+i:d\}$ (as
all other coefficients in $f_2$ are very small). Moreover, (iii)\ignore{\eqref{eqn:match}} implies that $|c_i| \geq 0.48{n+d\choose d}^{-1}$,
hence $i \in I_{0.5}(f_1)$ (recalling that $\weight(f_1)\leq 0.51$). So we have $|I_1(f_2)|\leq |I_{0.5}(f_1)|$ and it remains to
bound from above the size of $I_{0.5}(f_1)$ by $\beta^{-2}$.

Suppose (for the sake of contradiction) that $|I_{0.5}(f_1)| \geq \beta^{-2}$. Since $\weight(f_1)\geq 0.49$, every $j\in
I_{0.5}(f_1)$ comes from the set $S=\{j\}$ (as all the other coefficients of $f_1$ are too small). Consider all possible
realizations of $a\in \bn$. With probability $1- (1-\beta)^{|I_{0.5}(f_1)|} \geq 1- n^{-1}$ over the choice of $a$, there
exists $i\in I_{0.5}(f_1) $ with $a_i = 1.$ Fix such an $i$. By the definition of $I_{0.5}(f_1)$, we must have $|c_{\{i\}}| \geq 0.5 \cdot 0.49{n+d\choose d}^{-1} \geq 0.2 \cdot {n+d\choose d}^{-1}.$ Hence, there will be a degree-$1$ monomial in the
expansion of $f_{a}$  as a polynomial over $g$ and $h$ whose coefficient has absolute value at least $0.2 \cdot {n+d\choose
d}^{-1}$.

The aforementioned and Fact~\ref{lem:btvar} imply that with probability $1-n^{-1}$ over $a$ it holds
$$\|f_a\|_2\geq \frac{0.2}{{n+d\choose d}}\cdot \frac{1}{{2n+d^2\choose d^2}(d^2)^{d^2}}\geq \Omega(\frac{1}{n^{2d^2}}).$$

By Theorem~\ref{thm:carberywright} and the fact that $\weight(f)=1$ we get that $\Pr_{a,g,h}\left[ |f_a(g,h)| \leq 2^{-n} \right]$ is at most $n^{-1} + O(d^2 \cdot n^2 \cdot 2^{-n/d^2}) = o(\beta)$,
which contradicts (\ref{eqn:smallb}). This completes the proof of Lemma~\ref{lem:soundness}.
\end{proof}

\subsection{Hardness reduction from  Unique Games} \label{sec:hrug}

With the completeness and soundness lemmas in place, we are ready to prove Theorem~\ref{thm:degreed}. The hardness reduction is
from a Unique Games Instance $\calL(U,V,E,\Pi,k)$ to a distribution of positive and negative examples. The examples lie in
$\R^{(|U|+ |V|)k}$ and are labeled with either ($+1$) or ($-1$). Denote $\dim = (|U| + |V|) k$.

For $w\in U\cup V$ and $x\in \R^{\dim}$,  we use $x_{w}^{(i)}$ to denote the coordinate corresponding to the vertex $w$'s
$i$-th label. We use $x_w$ to indicate the collection of coordinates  corresponding to vertex $w$; i.e., $(x_w^{(1)},
x_w^{(2)},\ldots,x_w^{(k)})$. For a function $f(x):\R^{\dim} \to \R$, we use $f_{u}$ to denote the restriction of $f$ obtained
by setting all the coordinates except $x_u$ to $0$. Similarly, $f_{u,v}$ denotes the restriction of $f$ obtained by setting all
the coordinates except $x_u, x_v$ to $0$.

In the reduction that follows, starting from an instance $\calL$ of Unique Games, we construct a distribution $\calD$ over
labeled examples. Let us denote by $\opt(\calD)$ the agreement of the best degree-$d$ PTF on $\calD$; our constructed
distribution has the following properties:
\begin{itemize}
\item If $\opt(\calL)= 1-\eta$, then $\opt(\calD) = 1-\eta-\frac{1}{\log k}$; and
\item If $\opt(\calL)\leq 1/k^{\theta(\eta)}$, then $\opt(\calD) \leq \half + \frac{2}{\log k}$.
\end{itemize}

This immediately yields the desired hardness result. We now describe and analyze our reduction.

\begin{center}
\fbox{
\parbox{3 in}{
\begin{center}
\textbf{Reduction from Unique Games}
\end{center}

{\bf Input:} Unique Games Instance $\calL(U,V,E,\Pi,k)$.

Set $\beta = \frac{1}{\log k}$ and $\delta= 2^{-k^2}$.

\begin{enumerate}
\item Randomly choose an edge $(u,v) \in E$.
\item Set $y_w = 0$ for any $w\in U \cup V$ such that $w\ne u, w\ne v$.
 \item Generate $k$ i.i.d. bits $a_i \in \{0,1\}$ with $\Pr[a_i=1]=\beta$,
 $2k$ independent standard Gaussians $\{h_i,g_i\}_{i=1}^k$ and a uniform random sign $b\in \{-1,1\}$.
\item For all $i\in [k]$, set $y_v^{(i)}: = g_i $
and $y_u^{(i)} :=  a_i h_i + (g_{\pi^{e}(i)})^d+\delta b$.
\item Output the labeled example $(y,b)$.
\end{enumerate}
}
}
\end{center}

\begin{lemma}[Completeness] \label{lem:cp-red}
If $\opt(\calL)=1-\eta$, then there is a degree-$d$ PTF that is consistent with $1-\eta-\beta$ fraction of the examples.
\end{lemma}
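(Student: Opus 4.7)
The plan is to exhibit an explicit degree-$d$ PTF induced by any optimal labeling. Let $\ell^{*} : U \cup V \to [k]$ be a labeling witnessing $\opt(\calL) = 1 - \eta$, and define the candidate polynomial
$$P(x) \;=\; \sum_{u \in U} x_u^{(\ell^{*}(u))} \;-\; \sum_{v \in V} \bigl(x_v^{(\ell^{*}(v))}\bigr)^{d}.$$
The candidate hypothesis is $f^{*}(x) = \sgn(P(x))$, which has degree at most $d$. Note the deliberate asymmetry: the $u$-side contributes linear monomials, mirroring the ``dictator'' side of the test $\calT_d$, while the $v$-side contributes $d$-th powers, mirroring the $g_i^d$ summands injected into $y_u^{(i)}$.

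The key observation is a collapse to a single edge. Because Step~2 of the reduction zeros out every block $y_{w}$ for $w \notin e$ where $e = (u,v)$ is the sampled edge, every summand of $P$ not indexed by $u$ or $v$ vanishes, and
$$P(y) \;=\; y_{u}^{(\ell^{*}(u))} \;-\; \bigl(y_{v}^{(\ell^{*}(v))}\bigr)^{d}.$$
Plugging in the reduction's Step~4 gives $y_{u}^{(\ell^{*}(u))} = a_{\ell^{*}(u)} h_{\ell^{*}(u)} + (g_{\pi_{e}(\ell^{*}(u))})^{d} + \delta b$ and $(y_{v}^{(\ell^{*}(v))})^{d} = g_{\ell^{*}(v)}^{d}$. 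On an edge satisfied by $\ell^{*}$ (under the orientation of $\pi_{e}$ used by the reduction, so that $\pi_{e}(\ell^{*}(u)) = \ell^{*}(v)$), these two $d$-th-power Gaussian terms cancel exactly, leaving
$$P(y) \;=\; a_{\ell^{*}(u)} \, h_{\ell^{*}(u)} \;+\; \delta\, b.$$
Hence if additionally the single Bernoulli bit $a_{\ell^{*}(u)}$ equals $0$, we have $P(y) = \delta b$ and therefore $\sgn(P(y)) = b$, so $f^{*}$ agrees with the labeled example.

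To finish I would union-bound the two independent bad events: the sampled edge is unsatisfied by $\ell^{*}$ (probability at most $\eta$, since $\ell^{*}$ achieves value $1 - \eta$ and the edge is drawn uniformly), and $a_{\ell^{*}(u)} = 1$ (probability exactly $\beta$, independent of the edge and of $b$). On the complement of both events $f^{*}$ labels the example correctly, giving
$$\Pr\bigl[\,f^{*}(y) \neq b\,\bigr] \;\leq\; \eta + \beta,$$
which is the claimed $(1-\eta-\beta)$-consistency. There is no analytic obstacle to this argument; the only point requiring care is the bookkeeping of the direction of the bijection $\pi_{e}$, which is harmless since Unique Games is orientation-symmetric (replacing each $\pi_{e}$ by $\pi_{e}^{-1}$ preserves the value $\opt(\calL)$), so one may assume the convention under which the cancellation in the display above holds verbatim.
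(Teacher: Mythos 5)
Your proposal is correct and is essentially the paper's own proof: the paper exhibits exactly the same degree-$d$ PTF $\sgn\bigl(\sum_{u\in U} x_u^{(L(u))} - \sum_{v\in V} (x_v^{(L(v))})^d\bigr)$ and asserts the $1-\eta-\beta$ agreement, which you verify explicitly via the cancellation of the $g^d$ terms on satisfied edges and the union bound over the unsatisfied-edge and $a_{\ell^*(u)}=1$ events. Your remark about the orientation of $\pi_e$ is a fair reading of a looseness already present in the paper (whose soundness argument simply takes $\pi^e$ to be the identity), and it does not affect correctness.
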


\begin{proof}
Suppose that there is a labeling $L$ that satisfies $1-\eta$ fraction of the edges. Then it is easy to verify that the
degree-$d$ PTF $$\sgn(\littlesum_{u\in U} x_{u}^{(L(u))} - \littlesum_{v\in V} (x_{v}^{(L(v))})^d)$$ agrees with $1-\eta-\beta$
fraction of the examples.
\end{proof}

\begin{lemma}[Soundness] \label{lem:sd-red}
If $\opt(\calL)\leq 1/k^{\Theta(\eta)}$, then no degree-$d$ PTF agrees with more than $1/2+2\beta$
fraction of the examples.
\end{lemma}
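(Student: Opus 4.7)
I will prove the contrapositive: assume some degree-$d$ PTF $\sgn(f)$ agrees with more than $1/2+2\beta$ fraction of examples drawn from $\calD$, and deduce that there is a randomized labeling of $\calL$ satisfying in expectation a fraction of edges that is $\omega(k^{-\eta})$, contradicting the assumption $\opt(\calL)\le 1/k^{\Theta(\eta)}$. The architecture is the standard Unique Games decoding scheme: reduce to a per-edge statement by averaging, invoke Lemma~\ref{lem:soundness} edge-by-edge, and use the resulting ``matching'' influence sets to extract a labeling.

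The first step is the per-edge reduction. Since every sample from $\calD$ is produced by drawing a uniform edge $e=(u,v)\in E$ and then a conditional sample from $\calD_e$ whose coordinates outside $\{y_u,y_v\}$ are zero, a Markov/averaging argument yields that at least a $\beta$-fraction of edges $e$ are \emph{good}, meaning $\Pr_{(y,b)\sim\calD_e}[\sgn(f_{u,v}(y))=b]\ge 1/2+\beta$. Next, using the fact that $\pi^e$ is a bijection on $[k]$, I rename coordinates by $y_u^{(i)}\leftrightarrow y_i$ and $y_v^{(\pi^e(i))}\leftrightarrow y_{n+i}$; a direct check shows that with $n$ set to $k$ this turns $\calD_e$ into exactly the input distribution of the Dictator Test $\calT_d$. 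Hence the renamed polynomial $g$ obtained from $f_{u,v}$ passes $\calT_d$ with probability $\ge 1/2+\beta$, and Lemma~\ref{lem:soundness} applies to $g$: we get $|I_{0.5}(g_1)|,|I_1(g_2)|\le 1/\beta^2$ together with the matching condition $(k+i)\in I_1(g_2)\Rightarrow i\in I_{0.5}(g_1)$. Translating back through the renaming, this reads $|I_{0.5}(f_u)|,|I_1(f_v)|\le 1/\beta^2$, and for every $j\in I_1(f_v)$ one has $(\pi^e)^{-1}(j)\in I_{0.5}(f_u)$.

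Now I define a randomized labeling $\ell$: for each $u\in U$ draw $\ell(u)$ uniformly from $I_{0.5}(f_u)$, and for each $v\in V$ draw $\ell(v)$ uniformly from $I_1(f_v)$ (any fixed label if a set happens to be empty). For a good edge $e=(u,v)$ and each $j\in I_1(f_v)$, the choice $\ell(v)=j$ together with $\ell(u)=(\pi^e)^{-1}(j)\in I_{0.5}(f_u)$ satisfies $\pi^e(\ell(u))=\ell(v)$, which is precisely the reduction's edge-satisfaction convention (the one used in the completeness lemma). Since $\pi^e$ is a bijection, these $|I_1(f_v)|$ favorable $(\ell(u),\ell(v))$ pairs are distinct, so the probability $\ell$ satisfies $e$ is at least
\[
\frac{|I_1(f_v)|}{|I_{0.5}(f_u)|\cdot|I_1(f_v)|}=\frac{1}{|I_{0.5}(f_u)|}\ge \beta^2.
\]
Taking the outer expectation over the uniform edge, $\E_\ell[\text{fraction of satisfied edges}]\ge \beta\cdot\beta^2=1/\log^3 k$, which strictly exceeds $1/k^{\Theta(\eta)}$ once $k=k(\eta)$ is large enough, yielding the desired contradiction.

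The main obstacle is not conceptual but bookkeeping: correctly implementing the coordinate renaming through the permutation $\pi^e$, so that the abstract matching guarantee of Lemma~\ref{lem:soundness} (``coordinate $i$ is matched with coordinate $n+i$'') is faithfully converted into a UG edge-satisfaction statement in the convention fixed by the completeness analysis. One must also be mildly careful about the degenerate case in which some $f_w$ is identically zero, but this is harmless: either assign arbitrary labels there, or observe that if $f_v\equiv 0$ then $f_{u,v}$ cannot pass $\calT_d$ with probability $1/2+\beta$ by the very anti-concentration argument used inside Lemma~\ref{lem:soundness}, so no edge incident to such a vertex is good.
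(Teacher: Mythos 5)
Your proposal is correct and follows essentially the same route as the paper: average to isolate a $\beta$-fraction of good edges, apply Lemma~\ref{lem:soundness} to the restriction $f_{u,v}$ (with the coordinate identification through $\pi^e$, which the paper sidesteps by assuming $\pi^e$ is the identity), and decode via a random label from $I_{0.5}(f_u)$ and $I_1(f_v)$ to satisfy a $\beta^3$ fraction of edges in expectation, contradicting $\opt(\calL)\le 1/k^{\Theta(\eta)}$. Your extra bookkeeping (the explicit renaming through $\pi^e$, the $1/|I_{0.5}(f_u)|\ge\beta^2$ computation, and the degenerate $f_w\equiv 0$ case) only makes explicit what the paper leaves implicit.
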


\begin{proof}
Suppose (for the sake of contradiction) that some degree-$d$ polynomial $f$ satisfies $1/2+2\beta$ fraction of examples. Then
by an averaging argument, for $\beta$ fraction of the edges $(u,v)$ picked in the first step, we have that $f(x)$ agrees with
the labeled example $(y,b)$ with probability $1/2+\beta$. Let us call these edges``good''.

Fix a ``good'' edge $e=(u,v)$ and let us assume for notational convenience that $\pi^e$ is the identity mapping. Essentially,
we are conducting the test $\calT_d$  for the restriction $f_{u,v}$ with parameter $n:=k$. Since $f_{u,v}$ passes the test with
probability $1/2+ \beta$, Lemma~\ref{lem:soundness} implies that we must have that $I_{0.5}(f_u), I_1(f_v) \neq \emptyset$ and
$|I_{1}(f_v)|,|I_{0.5}(f_u)| \leq 1/\beta^2$.

We are now ready to give our randomized labeling strategy (based on $f$). For every $u\in U$, randomly pick its label from
$I_{0.5}(f_u)$ and for every $v\in V$ randomly pick its label from $I_{1}(f_v)$.  It is clear that each good edge is satisfied
with probability $\beta^2$. Since at least $\beta$ fraction of the edges is good, such a labeling satisfies at least $\beta^3 =
1/(\log k)^3$ fraction of the edges in expectation. Hence, there exists a labeling that satisfies such a fraction of the edges,
which contradicts the assumption that $\opt(\calL)\leq 1/k^{\eta}$, for $k$ sufficiently large.
\end{proof}

\subsection{A technical point:  Discretizing the Gaussian Distribution}\label{sec:samplg}

Lemmas~\ref{lem:cp-red} and~\ref{lem:sd-red} do not quite suffice to prove Theorem~\ref{thm:degreed}, because the reduction described above is not computable in polynomial time.  This is because the distribution ${\calD}$ has infinite support; recall that for each edge $e$, sampling from the corresponding distribution $\calD_e$ requires generating $2k$ independent Gaussian random
variables  $h=(h_1,\ldots,h_k), g=(g_1,\ldots g_k)$.

To discretize the reduction we replace $h$ by $h'$ and $g$ by $g'$, where each of the $2k$ random variables $h_i',g_i'$ is independently generated as a sum of $N$ uniform $\{-1,1\}$ bits divided by $\sqrt{N}$. In Theorem~\ref{thm:closed} of Section~\ref{sec:discg}, we argue that  for sufficiently large $N$ (in particular any $N \geq (2k)^{24(d^2)^2}$ suffices), there is a way to couple the distribution of $(g,h)$ with that of $(g',h')$ such that every degree-$d^2$ polynomial takes the same sign on $(g,h)$ as on $(g',h')$ except with probability at most $1/k.$  Since every outcome of $a \in \{0,1\}^k$ results in the polynomial $f_a(g,h)$ being a
degree-$d^2$ polynomial, if we replace $(g,h)$ with $(g',h')$ in the reduction then the discretized reduction will almost preserve the  soundness and completeness guarantees of
Section~\ref{sec:hrug}, with only a  loss of $\frac{1}{k}$:  writing $\calD'$ for the discretized distribution, we have

\begin{itemize}
\item If $\opt(\calL) \geq 1-\eta$, then $\opt(\calD') \geq 1-\eta-\frac{1}{\log k}-1/k$; and
\item If $\opt(\calL)\leq 1/k^{\eta}$, then $\opt(\calD') \leq \half + \frac{2}{\log k}+1/k$.\
\end{itemize}

Finally, we observe that the distribution of $(g',h')$ has support of size $(N+1)^{2k} \leq (2N)^{2k} \leq
(4k)^{48 d^4  k}$; since the label size $k$ is regarded as constant in a Unique Games instance, this is a (large) constant for constant $d.$  Thus it is possible to simply enumerate the entire support of ${\calD}$ in polynomial time (since there are $|E|$ distributions $\calD_e$, the overall size of the support of ${\calD}$ is polynomial in the size of the Unique Games instance) and consequently there is no need for randomness -- the entire overall reduction is deterministic.
Theorem~\ref{thm:degreed} now follows by choosing appropriate settings of
$\eta$ and $k$ (e.g., $\eta = \eps/2$ and $k=e^{1/\eps^2}$ suffices).

Finally, we note that the above remarks imply that Theorem~\ref{thm:degreed} holds not only
for constant $d$, but for $d$ as large as $O((\log n)^{1/4})$ -- since $k$ is constant,
for such $d$ the support size $(4k)^{48 d^4  k}$ is still polynomial in $n.$

\subsubsection{Discretizing the Gaussian distribution}\label{sec:discg}

The following theorem shows that there exists a distribution $\calH_{N}/\sqrt{N}$ that is point-wise close to a Gaussian distribution $\calG$ with high probability:

\begin{theorem}\label{thm:close}
There is a probability distribution $(\calG,\calH_{N})$ on $\R^2$ such that the marginal distribution
$\calG$ of the first coordinate follows the standard $N(0,1)$ Gaussian distribution, and the marginal distribution $\calH_{N}$ of the second coordinate is distributed as a sum of $N$ random bits, i.e., $\calH_{N} = \sum_{i=1}^N b_i$ where each $b_i$ is an independent random bit from $\{-1,1\}$. In addition, $\calH_{N}$ and $\calG$ are pointwise close in the following sense:
$\Pr[|\calG-\frac{\calH_{N}}{\sqrt{N}}|\leq O(N^{-1/4})] \geq 1-O(N^{-1/4}).$
\end{theorem}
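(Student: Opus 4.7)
\medskip
\noindent\textbf{Proof plan for Theorem~\ref{thm:close}.} The plan is to use the \emph{quantile (inverse CDF) coupling}. Let $\boldsymbol{U}$ be a single uniform random variable on $[0,1]$, let $\Phi$ denote the standard Gaussian CDF, and let $F_N$ denote the CDF of $\calH_N/\sqrt{N}$. Define the coupled pair by $\calG := \Phi^{-1}(\boldsymbol{U})$ and $\calH_N/\sqrt{N} := F_N^{-1}(\boldsymbol{U})$, using the usual generalized inverse for the discrete CDF $F_N$. By construction both marginal distributions are as required, so it only remains to bound $|\calG - \calH_N/\sqrt{N}|$ uniformly on a high-probability event.

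Next I will invoke the classical Berry--Esseen theorem applied to the i.i.d.\ $\pm 1$ bits $b_1,\dots,b_N$, which have mean $0$, variance $1$, and bounded third moment. This yields a uniform bound
\[
\sup_{x \in \R} \bigl| F_N(x) - \Phi(x) \bigr| \;\leq\; \frac{C}{\sqrt{N}}
\]
for an absolute constant $C$. Combined with the defining identity $F_N(F_N^{-1}(\boldsymbol{U})) \geq \boldsymbol{U} = \Phi(\calG)$ (and the analogous left-limit inequality), this gives
$|\Phi(\calG) - \Phi(\calH_N/\sqrt{N})| \leq C/\sqrt{N}$
(up to a negligible additive discretization error coming from the jump size of $F_N$, which is at most $2/\sqrt{N}$ times the Gaussian density, hence absorbable into a slightly larger constant).

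Now I will convert this CDF-closeness into a pointwise bound by restricting to a moderate-deviations window. Set $T := \sqrt{(\log N)/2}$. The Gaussian tail bound (Mills ratio) gives $\Pr[\,|\calG| > T\,] \leq 2\phi(T)/T = O(N^{-1/4}/\sqrt{\log N}) = O(N^{-1/4})$. On the complementary event $\{|\calG| \leq T\}$, Berry--Esseen forces $\calH_N/\sqrt{N}$ to lie in $[-T-1,T+1]$ except on a further set of probability $O(N^{-1/2})$. On this ``good'' event, the entire interval between $\calG$ and $\calH_N/\sqrt{N}$ is contained in $[-T-1,T+1]$, where the Gaussian density is bounded below by $\phi(T+1) = \Omega(N^{-1/4})$. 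Combining with $|\Phi(\calG) - \Phi(\calH_N/\sqrt{N})| \leq C/\sqrt{N}$ and the mean value theorem then yields
\[
\bigl| \calG - \calH_N/\sqrt{N} \bigr| \;\leq\; \frac{C/\sqrt{N}}{\phi(T+1)} \;=\; O(N^{-1/4}),
\]
as claimed. The main obstacle is balancing the two sources of loss (Gaussian tail vs.\ local density) and is resolved by the choice $T = \sqrt{(\log N)/2}$, which makes both $\Pr[|\calG|>T]$ and $N^{-1/2}/\phi(T)$ simultaneously $O(N^{-1/4})$; a secondary bookkeeping issue is the discreteness of $F_N$, handled by the constant enlargement of the window from $T$ to $T+1$ as noted.
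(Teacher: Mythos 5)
Your coupling is the same one the paper uses---drawing $\calH_N$ first and then the Gaussian from the matching quantile interval is exactly the inverse-CDF coupling through a common uniform---and your first two steps are sound: the marginals are correct by construction, and Berry--Esseen plus the $O(1/\sqrt{N})$ atom size of $\calH_N/\sqrt{N}$ give $|\Phi(\calG)-\Phi(\calH_N/\sqrt{N})|\leq O(1/\sqrt{N})$ surely under this coupling. The gap is in the last step: with $T=\sqrt{(\log N)/2}$ the claim $\phi(T+1)=\Omega(N^{-1/4})$ is false. Indeed $\phi(T+1)=\phi(T)\,e^{-T-1/2}=\Theta\bigl(N^{-1/4}e^{-\sqrt{(\log N)/2}}\bigr)$, and $e^{-\sqrt{(\log N)/2}}\to 0$ (subpolynomially, but it is not bounded below by a constant). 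So the mean value theorem step only yields $|\calG-\calH_N/\sqrt{N}|\leq O\bigl(N^{-1/4}e^{\sqrt{(\log N)/2}}\bigr)=N^{-1/4+o(1)}$, not the stated $O(N^{-1/4})$. The culprit is the enlargement of the window by a full unit $1$ in order to capture $\calH_N/\sqrt{N}$: that costs a factor $e^{\Theta(T)}$ in the density lower bound, and no choice of $T$ makes both $\Pr[|\calG|>T]$ and $(1/\sqrt{N})/\phi(T+1)$ simultaneously $O(N^{-1/4})$, so the slip cannot be fixed by retuning $T$ alone.

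The repair is to enlarge the window by $\Theta(N^{-1/4})$ rather than by $1$, which is what the paper's proof effectively does through its error term $E_{h_0}$ (the paper conditions on $|\calH_N|\leq\sqrt{N\ln N/2}$ via a Chernoff bound rather than on $|\calG|\leq T$ via a Gaussian tail; either conditioning works and costs $O(N^{-1/4})$). Concretely, on the good event the Gaussian density is at least $(1-o(1))\phi(T)=\Omega(N^{-1/4})$ throughout an interval of width $cN^{-1/4}$ on either side of $\calG$, since moving by $N^{-1/4}$ changes the density by a factor $e^{-O(\sqrt{\log N}\cdot N^{-1/4})}=1-o(1)$. Hence $\Phi(\calG+cN^{-1/4})-\Phi(\calG)\geq \Omega(c)N^{-1/2}$, which for a suitable constant $c$ exceeds the Berry--Esseen error, and together with $|\Phi(\calG)-\Phi(\calH_N/\sqrt{N})|\leq O(1/\sqrt{N})$ and monotonicity of $\Phi$ forces $\calH_N/\sqrt{N}\leq \calG+cN^{-1/4}$; the lower bound is symmetric. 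This restores the stated $O(N^{-1/4})$ closeness. (For what it is worth, the weaker $N^{-1/4+o(1)}$ bound your argument does prove would still suffice for the downstream use in Theorem~\ref{thm:closed}, where $N=n^{24D^2}$ leaves polynomial slack, but as a proof of the theorem as stated the step above needs this correction.)
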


\begin{proof}
Let $\Phi $ be the CDF (cumulative distribution function) of $\calH_{N}$, and let $\Psi$ be the CDF of $\calG$ (the standard Gaussian Distribution).

We couple the random variables $\calG,\calH_{N}$ in the following way:
to obtain a draw $(g_0,h_0)$ from the joint distribution, first we sample $h_0$
from the marginal distribution on $\calH_{N}$.
We know that $$\Pr[\calH_{N}=h_0] = \Phi(h_0)-\Phi(h_0-2),$$  since if $h_0$ is a feasible outcome of summing $N$ bits then $h_0-2$ is the largest feasible outcome that is less than $h_0$ (if any feasible outcome less than $h_0$ exists). Then we generate $g_0$ by drawing random samples from the standard Gaussian distribution until we obtain a sample that lies in the interval
$(\Psi^{-1}(\Phi(h_0-2)),\Psi^{-1}(\Phi(h_0)]$; when we obtain such a sample, we set $g_0$ to this value.

It is not difficult to see that the random variable $\calG$ defined in this way follows the standard Gaussian distribution; essentially we are using the value of $h_0$ as a indicator of whether  $\calG$ is in the interval $(\Psi^{-1}(\Phi(h_0-2)),\Psi^{-1}(\Phi(h_0)]$. We also need to check that $\Pr[\calH=h_0]$ is equal to $\Pr[\calG \in (\Psi^{-1}(\Phi(h_0-2)),\Psi^{-1}(\Phi(h_0))].$
This is true because
\begin{eqnarray*}
&&\Pr[{\cal H} = h_0] \\ &=& \Pr[h\in (h_0-2,h_0]]\\
             &=& \Phi(h_0)-\Phi(h_0-2)\\
             &=& \Pr[\calG \in (\Psi^{-1}(\Phi(h_0-2)),\Psi^{-1}(\Phi(h_0))]].
\end{eqnarray*}

With the above coupling of $\calG$ and $\calH$, it remains to prove that every value in the interval $(\Psi^{-1}(\Phi(h_0-2)),\Psi^{-1}(\Phi(h_0))]$ is close to $h_0/\sqrt{N}$, with high probability over a random choice of $h_0$ as described above.  It suffices to verify that the following two inequalities each hold with probability at least $1 - O(N^{-1/4})$:

\begin{eqnarray*}\left|\Psi^{-1}(\Phi(h_0)) - \frac{h_0}{\sqrt{N}}\right|&\leq& O(N^{-1/4}); \quad \text{and},\\
\left|\Psi^{-1}(\Phi(h_0-2)) - \frac{h_0}{\sqrt{N}}\right|&\leq& O(N^{-1/4}).
\end{eqnarray*}

We consider the first inequality; the first one is entirely similar.  We show that
$\Psi^{-1}(\Phi(h_0)) - \frac{h_0}{\sqrt{N}} \leq O(N^{-1/4})$; the other direction
$\Psi^{-1}(\Phi(h_0)) - \frac{h_0}{\sqrt{N}} \geq -O(N^{-1/4})$ is similar.

By the Berry-Ess{\'e}en Theorem (Theorem~\ref{thm:berry} in Section~\ref{app:prob}),
we have that $|\Phi(h_0) - \Psi(\frac{h_0}{\sqrt{N}})|\leq \frac{1}{\sqrt{N}}.$
Therefore, we have that

\begin{equation}\label{eqn:ttb}\Psi^{-1}(\Phi(h_0))\leq \Psi^{-1}(\Psi({\frac{h_0}{\sqrt{N}}})+\frac{1}{\sqrt{N}})\leq \frac{h_0}{\sqrt{N}} + E_{h_0},
\end{equation}
%O(\frac{\frac{1}{\sqrt{N}}}{e^{-(s+2)^2/2}}).\end{equation}
where the ``error term'' $E_{h_0}$ is the value for which $\Psi(h_0/\sqrt{N} + E_{h_0}) -
\Psi(h_0/\sqrt{N}) = 1/\sqrt{N}.$

If $|h_0| \leq \sqrt{{\frac {N \ln N} 2}}$, then in an interval of width $N^{1/4}$ around
$h_0$ the PDF of the standard Gaussian is everywhere at least $\Omega(N^{-1/4})$; consequently, if $|h_0| \leq \sqrt{{\frac {N \ln N} 2}}$ then the error term $E_{h_0}$ is at most $O(N^{-1/4})$ as required.
%
%Notice that when $h_0 \leq  \sqrt{\frac{{\log N}}{2}}$, \eqref{eqn:ttb} is bounded by %$O(\frac{1}{N^{1/4}})$.
A standard Chernoff Bound implies that $\Pr[|h_0| < \sqrt{\frac {N \ln N} 2}]$ is at most $O(N^{-1/4})$, and the argument is complete.
%
%Therefore, except with probability $O(\frac{1}{N^{\frac{1}{4}}})$, $|\calG-\calH|\leq %O(\frac{1}{N^{\frac{1}{4}}})$
%
%
\end{proof}

Now we use the joint distribution constructed in Theorem~\ref{thm:close} to discretize the standard
$n$-dimensional Gaussian space for low-degree PTFs.

\begin{theorem}\label{thm:closed}
Fix any constant $D \geq 1$, and let $f(x_1,\ldots,x_n) = \sum_{|S|\leq D}\hat{f}(S) \prod_{i\in S} x_i$
be a degree-$D$ polynomial over $\R^n$.  Let $(y,z)\in \R^{n}\times \R^{n}$ be generated by taking each pair $(y_i,z_i)$ to be an i.i.d. draw from the distribution $(\calG,\calH_{N})$ of Theorem~\ref{thm:close}, where we take $N = n^{24D^{2}}.$
Then we have
$$ \Pr[\sgn(f(y)) \ne \sgn(f(z))] \leq O(1/n).$$
\end{theorem}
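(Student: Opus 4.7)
My plan is to use the coupling from Theorem~\ref{thm:close} to force $y$ and $z/\sqrt N$ to be pointwise close with high probability, bound the resulting change in $f$, and then apply Carbery--Wright anti-concentration of the degree-$D$ Gaussian polynomial $f(y)$ to ensure that $|f(y)|$ dominates this change. Throughout I work with the rescaled discrete variable $z/\sqrt N$, matching the normalization implicit in the application in Section~\ref{sec:samplg}, where the discrete replacements $g'_i,h'_i$ are defined as sums of $N$ signs divided by $\sqrt N$; the sign-agreement event in the statement is then read as $\sgn(f(y)) = \sgn(f(z/\sqrt N))$, which is the quantity the surrounding reduction actually uses.

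First, by Theorem~\ref{thm:close} and a union bound across the $n$ coordinates, the event $\mathcal{E}$ that $|y_i - z_i/\sqrt N| \le O(N^{-1/4})$ for every $i$ fails with probability at most $O(n\,N^{-1/4})$; a standard Gaussian tail bound also gives $\max_i |y_i| \le c\sqrt{\log n}$ (and hence $|z_i/\sqrt N| \le O(\sqrt{\log n})$ on $\mathcal{E}$) except with probability $O(1/n)$. With $N = n^{24D^2}$, both failure probabilities are $O(1/n)$.

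Next I bound $|f(y) - f(z/\sqrt N)|$ on $\mathcal{E}$ by expanding each of the at most $\binom{n+D}{D} = O(n^D)$ monomial differences as a telescoping sum: $\prod_{i \in S} y_i - \prod_{i \in S}(z_i/\sqrt N)$ is a sum of at most $|S| \le D$ terms, each a product of at most $D-1$ factors of magnitude $O(\sqrt{\log n})$ times a single factor of magnitude $O(N^{-1/4})$. Summing with coefficients $|\hat f(S)|$ gives $|f(y) - f(z/\sqrt N)| \le \weight(f) \cdot n^D \cdot D \cdot (\log n)^{D/2} \cdot O(N^{-1/4}) \le \weight(f)/n^{5D}$ for the stated $N$. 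Fact~\ref{lem:btvar} applied with $T$ indexing the largest coefficient (and $\weight(f) \le \binom{n+D}{D} \max_T |c_T|$) then yields $\|f(y)\|_2 \ge \weight(f)/n^{O(D)}$, after which Carbery--Wright (Theorem~\ref{thm:carberywright}) gives $\Pr[|f(y)| \le \weight(f)/n^{5D}] \le O(D \cdot n^{-\Omega(1)}) = O(1/n)$. On the intersection of the closeness, boundedness, and anti-concentration events, $|f(y) - f(z/\sqrt N)| < |f(y)|$, so the two values share a sign; a final union bound over the failure events gives the claimed $O(1/n)$ bound.

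The main obstacle is the quantitative tuning: the coupling error $N^{-1/4}$ has to survive both an $n^D \cdot (\log n)^{D/2}$ blow-up from summing monomial differences and a $D$-th-root contraction when the resulting perturbation is fed into Carbery--Wright against an $L_2$-norm lower bound of order $\weight(f)/n^{O(D)}$. The choice $N = n^{24D^2}$ is calibrated precisely so that the three polynomial-in-$n$ factors leave a net $n^{-\Omega(1)}$ margin, which is what the sign-agreement claim requires; any significantly smaller $N$ would fail to overcome the $\binom{n+D}{D} \cdot D^D$ losses in the anti-concentration step.
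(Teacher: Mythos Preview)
Your proposal is correct and follows essentially the same route as the paper: couple $y$ with $z/\sqrt N$ coordinatewise via Theorem~\ref{thm:close}, use a union bound plus Gaussian tails to get pointwise closeness and boundedness, telescope to control $|f(y)-f(z/\sqrt N)|$, lower-bound $\|f\|_2$ via Fact~\ref{lem:btvar}, and finish with Carbery--Wright. The paper normalizes to $\weight(f)=1$ and uses the threshold $n^{-3D^2}$ rather than your $\weight(f)/n^{5D}$, but the structure and the quantitative tuning are the same; you also make explicit the $z/\sqrt N$ rescaling that the paper leaves implicit. (Your telescoping bound carries an unnecessary extra $n^D$ factor---the sum over monomials is already absorbed into $\weight(f)$---but the slack in $N=n^{24D^2}$ swallows it.)
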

\begin{proof}

First, we may assume without loss of generality that the polynomial $f$ is normalized so that
$\sum_{S\ne \emptyset} |\hat{f}(S)|$ equals 1.  Since there are at most ${n + D \choose D}$ coefficients in $f$, one of these coefficients $\hat{f}(S)$ must satisfy $|\hat{f}(S)| \geq {\frac 1 {{n + D \choose D}}}$; now Lemma~\ref{lem:btvar} implies that $\|f\|_2\geq \frac{1}{{n+D\choose D}^2D^D}.$

We have
\begin{eqnarray*}
\Pr[\sgn(f(y)) \ne \sgn(f(z)] &\leq& \Pr[|f(y)|\\
& \leq & |f(z)-f(y)|].
\end{eqnarray*}
To bound the latter probability by $O(1/n)$, we show that $|f(y)| \geq n^{-3D^2}$ with probability
$1-O(1/n)$, and that $|f(z)-f(y)| < n^{-3D^2}$ with probability $1-O(1/n).$

The first desired bound, $\Pr[|f(y)| \leq n^{-3D^2}] \leq O(1/n)$, is an immediate consequence of Theorem~\ref{thm:carberywright}.

For the second, we note that by a union bound and Theorem~\ref{thm:close},  with probability at least
$1-O(n/{N^{1/4}}) \geq 1-O(\frac{1}{n})$ every $i\in [n]$ satisfies $|y_i-z_i|\leq O(N^{-1/4}).$  Standard Chernoff bounds and Gaussian tail bounds give that the probability
any $|y_i|$ or $|z_i|$ exceeds $n^{1/d}$ is much less than $1/n$.
Now similar to the calculation used to bound $f(r+\delta \omega) - f(r)|$ in the proof
of Claim~\ref{claim:small-norm},  when $y$ and $z$ are $O(N^{-1/4})$-close
in each coordinate and each coordinate is at most $n^{1/d}$, we have that
$$|f(y)-f(z)| \leq O(N^{-1/4}) \cdot  O(n) < n^{-3D^2}.$$
This concludes the proof.
%\begin{multline}\Pr(\sgn(f(y))\ne \sgn(f(z))\leq \Pr(f(y)\leq |f(z)-f(y)|)\\\leq %O(\frac{1}{n})+\Pr(f(y)\leq \frac{1}{n^{3D^2}})
%\end{multline}
%By , we can bound
%$$\Pr(f(y)\leq \frac{1}{n^{3D^2}})
%$$ by $O(\frac{1}{n}).$
%
%Overall we bound the probability of $\Pr(\sgn(f(y))\ne \sgn(f(z))$ by $O(\frac{1}{n})$.
\end{proof}

\ignore{

\begin{remark} Above theorem immediately implies that $(\frac{\calH_N}{\sqrt{N}})^{\otimes n}$  can be used to fool low degree PTFs over $\calG^{\otimes n}$. Also the distribution of  $\calH_N^{\otimes n}$, by definition, can be generated with $n^{24D^2+1}$ random bits.
\end{remark}

}

\section{Hardness of learning noisy halfspaces with degree $2$ PTF hypotheses:  Proof of
Theorem~\ref{thm:degree2}}\label{sec:degree2}

Similar to Section~\ref{sec:degreed}, the proof has two parts; first (Section~\ref{sec:dictatortest2}) we construct a dictator test for degree $2$ PTFs, and then
(Section~\ref{sec:lc2})
we compose the dictator test with the Label Cover instance to prove NP-hardness.

\subsection{The Dictator Test} \label{sec:dictatortest2}
 The key gadget in the hardness reduction is a Dictator Test that is designed to check whether a degree-$2$ PTF is of the form $\sgn (x_i)$ for some $i\in [n]$.
Suppose $f$ is a degree 2 polynomial
\[
f(x) =\theta + f_1(x) + f_2(x), \quad \text{where}
\]
\[
         f_1(x) = \sum_{i\in [n]} c_i x_i
         \quad \text{and~}f_2(x) =  \sum_{ i,j\in [n], i\leq j} c_{ij} x_i x_j.
\]
Below we give a one-query Dictator Test $\calT_{2}$ for $\sgn(f(x))$.

\bigskip
\fbox{
\parbox{2.8 in}{
\begin{center}
\textbf{$\calT_2$:  Dictator Test for Degree-2 Polynomials  }

\end{center}

\noindent {\bf Input:}  A degree-2 real polynomial $f: \R^n \to \R$

Fix $\beta := \frac{1}{\log n}$ and $\delta := 2^{-n}.$

\begin{enumerate}

\item Generate independent bits $a_1,a_2,\ldots,a_{n} \in \{0,1\}$ each with expected value $\beta.$  Generate $n$ independent $N(0,1)$ Gaussian variables $g_1,\ldots,g_{n}$.  Set $r = (a_1 g_1, a_2 g_2,\ldots,a_n g_n).$

\item  Generate $t$ by randomly picking a number $i\in \{1,2,\ldots,(\log n)^{2}\}$ and set $t = n^i$. Generate a random bit $b\in \{-1,1\}$.
\item Set $\omega\in \R^{n}$ to be the all-1s vector $(1,\ldots,1)$ and set $y =  t^3 r+ b t^2 \delta \omega$.
\item Accept iff $\sgn(f(y)) = b$.
\end{enumerate}
} }

\bigskip

\noindent

We show that $\calT_2$ has the following completeness and soundness properties.
\begin{lemma}{(Completeness)}\label{lem:tcp} For $i \in [n]$, the polynomial $f(x)=x_{i}$ passes
$\calT_2$ with probability at least $1-\beta$.
\end{lemma}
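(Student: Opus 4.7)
The plan is to compute $f(y)$ directly when $f(x) = x_{i^\ast}$ (I rename the dictator index to $i^\ast$ to avoid clashing with the scale-index $i$ used in step 2 of $\calT_2$) and to identify an event of probability at least $1-\beta$ on which $\sgn(f(y)) = b$ holds deterministically. This is structurally identical to the completeness argument for $\calT_d$ in the previous section: the Bernoulli mask $a_{i^\ast}$ zeroes out the Gaussian noise in the dictator coordinate with probability $1-\beta$, and when that happens the small bias term in the dictator coordinate has the same sign as $b$ and thus reveals the label.

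First, I would read off the $i^\ast$-th coordinate of $y$ from the definition in step 3. Since $\omega = (1,\ldots,1)$, we have
\[
y_{i^\ast} \;=\; t^3\, r_{i^\ast} \;+\; b\, t^2 \delta \;=\; t^3\, a_{i^\ast} g_{i^\ast} \;+\; b\, t^2 \delta,
\]
and hence $f(y) = y_{i^\ast} = t^3\, a_{i^\ast} g_{i^\ast} + b\, t^2 \delta$. Note this identity holds for every realization of $t$, $b$, $a$, and $g$.

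Second, I would condition on the event $\{a_{i^\ast} = 0\}$. By step 1, $\Pr[a_{i^\ast}=0] = 1-\beta$, independent of the choice of $t$, $b$, and the Gaussians. On this event the noise term vanishes and $f(y) = b\, t^2 \delta$; because $t > 0$ and $\delta > 0$ regardless of the realizations in steps 1--2, we get $\sgn(f(y)) = \sgn(b) = b$, so the test accepts. Therefore the overall acceptance probability is at least $\Pr[a_{i^\ast}=0] = 1-\beta$, which is the claimed completeness bound.

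There is no technical obstacle here; the proof is a one-line calculation once $y_{i^\ast}$ is unpacked. The only thing to be careful about is the notation collision between the dictator index and the loop variable $i$ appearing in the definition $t = n^i$, which is why I would rename the dictator index to $i^\ast$ at the start.
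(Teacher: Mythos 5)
Your proof is correct and is essentially identical to the paper's: both compute that on the event $a_{i}=0$ (probability $1-\beta$) the dictator coordinate reduces to $b\,t^{2}\delta$, whose sign is $b$. The only difference is that you spell out the positivity of $t^{2}\delta$ and rename the index to avoid the clash with the loop variable, which is fine but not a new idea.
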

\begin{proof} If $f(x) =  x_i$ for some $i\in [n]$, then as long as $a_i$ is set to zero in step 1 we
have that $f(x) = b \delta t^{2}$ and $f$ passes the test. By definition of the test $a_i$ is 0 with probability $1-\beta$.
\end{proof}

\begin{lemma}{(Soundness)}\label{lem:deg2tsd}
Let $A$ denote $\littlesum_{i=1}^n c_i$ and let $I(f)$ be  the set $ \{i\ | \ c_i > A/n^{2} \}$. If a degree-2
  polynomial $f$ passes the test with probability at least $1/2 + \beta$,  then   $|I(f)|\leq 1/\beta^2$ and $A>0$.
\end{lemma}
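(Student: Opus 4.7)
The plan is to write $f(y)=F+bG$ with $F=\theta+t^{3}f_{1}(r)+t^{6}f_{2}(r)+t^{4}\delta^{2}f_{2}(\omega)$ the $b$-independent piece and $G=t^{2}\delta\bigl(A+2t^{3}B(r,\omega)\bigr)$ the coefficient of $b$, where $B(\cdot,\cdot)$ is the symmetric bilinear form associated with $f_{2}$. For any fixed $(r,t)$, a direct case analysis in $b$ shows that the test accepts with probability $1$ if $G>|F|$, with probability $0$ if $G<-|F|$, and with probability $\tfrac{1}{2}$ otherwise. Averaging over $b,r,t$ then gives
\[
\Pr[\text{pass}]-\tfrac{1}{2} \;=\; \tfrac{1}{2}\bigl(\Pr[G>|F|]-\Pr[G<-|F|]\bigr),
\]
so the hypothesis of the lemma translates into the bias $\E[\sgn(G)\,\mathbf{1}[|G|>|F|]]\geq 2\beta$.

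Next, I would deduce $A>0$ using the $r\leftrightarrow -r$ symmetry (valid since each $a_{i}g_{i}$ is symmetric). This symmetry preserves $f_{2}(r)$ but flips the signs of $f_{1}(r)$ and $B(r,\omega)$. On the sub-event $\{|2t^{3}B(r,\omega)|<|A|\}$ the sign of $G$ equals $\sgn(A)$ for both $r$ and $-r$, whereas on the complementary ``$B$-dominated'' event $\sgn(G(r))=-\sgn(G(-r))$, so the contributions to the bias from $r$ and $-r$ cancel except for a symmetric-difference error arising from the event that the indicator $\mathbf{1}[|G|>|F|]$ changes between $r$ and $-r$. That error is controlled by anti-concentration of $F$ via Theorem~\ref{thm:carberywright}. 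The net residual bias is $\sgn(A)\cdot\Pr[\,|F|<|G|,\;|2t^{3}B(r,\omega)|<|A|\,]\geq 2\beta-o(1)$, which forces $\sgn(A)=+1$, i.e., $A>0$.

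For the bound $|I(f)|\leq \beta^{-2}$, the same chain of inequalities yields $\Pr[|F|<|G|]\geq 2\beta-o(1)$, and on this event $|F|<|G|\leq 2t^{2}\delta A$, a quantity of order $2^{-n+O(\log n)}$ since $\delta=2^{-n}$. Suppose for contradiction that $|I(f)|>\beta^{-2}$. By a union bound over independent Bernoulli$(\beta)$ variables, some $i\in I(f)$ has $a_{i}=1$ with probability at least $1-(1-\beta)^{1/\beta^{2}}\geq 1-e^{-1/\beta}=1-1/n$. Conditioning on any such $a$-vector, $F$ becomes a degree-$2$ polynomial in the independent Gaussians $g_{1},\ldots,g_{n}$ whose coefficient of $g_{i}$ has absolute value $t^{3}|c_{i}|\geq t^{3}A/n^{2}$; Fact~\ref{lem:btvar} then lower-bounds $\|F\|_{2}$ by a fixed inverse-polynomial factor of $t^{3}A/n^{2}$, and Theorem~\ref{thm:carberywright} shows that the probability $|F|$ falls into the super-tight window of width $4t^{2}\delta A$ is at most $O\bigl((2^{-n}\cdot\mathrm{poly}(n))^{1/2}\bigr)=2^{-\Omega(n)}$. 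Hence the only non-negligible contribution to $\Pr[|F|<|G|]$ must come from $a$-vectors with $a_{i}=0$ for every $i\in I(f)$, which has probability at most $1/n\ll 2\beta$. This contradicts the lower bound and establishes $|I(f)|\leq \beta^{-2}$.

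The main obstacle is the interplay between the quadratic-in-$r$ terms $t^{6}f_{2}(r)$ inside $F$ and $2t^{5}\delta B(r,\omega)$ inside $G$: these do not decouple cleanly from the linear analysis under $r\leftrightarrow -r$, since their relative magnitudes change as $t$ varies. As foreshadowed in Section~\ref{sec:overview}, the test's choice of $t$ from the $(\log n)^{2}$ geometrically spaced scales $\{n,n^{2},\ldots,n^{(\log n)^{2}}\}$ is precisely what allows us to separate regimes in which different monomials of $F$ (resp.~$G$) dominate; averaging the Carbery--Wright bounds against the bias inequality across these scales costs only a $(\log n)^{-2}$ factor that is absorbed into the definition $\beta=1/\log n$ and handles the interference. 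Making this quantitative (in particular the symmetric-difference error in the $A>0$ step, and the joint use of anti-concentration across all scales in the $|I(f)|$ step) will be the most delicate part of the calculation.
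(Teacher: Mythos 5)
Your reduction of the test to the bias statement $\E[\sgn(G)\,\mathbf{1}(|G|>|F|)]\ge 2\beta$, with $F=\theta+t^{3}f_{1}(r)+t^{6}f_{2}(r)+t^{4}\delta^{2}f_{2}(\omega)$ and $G=t^{2}\delta\bigl(A+2t^{3}B(r,\omega)\bigr)$, is fine, and your last step (condition on an $a$-vector hitting $I(f)$, then Fact~\ref{lem:btvar} plus Theorem~\ref{thm:carberywright}) would indeed close the argument \emph{if} you already had $\Pr[\,|F|<|G|,\ |2t^{3}B(r,\omega)|<|A|\,]\ge 2\beta-o(\beta)$. The genuine gap is exactly there: you justify the $r\leftrightarrow-r$ cancellation over the $B$-dominated event by asserting that the symmetric-difference error ``is controlled by anti-concentration of $F$ via Theorem~\ref{thm:carberywright}'', but Carbery--Wright requires a lower bound on $\|F\|_{2}$ conditioned on the Bernoulli vector $a$, and no such bound is available at this stage. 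In fact $F_a$ is degenerate precisely in the situations the test is designed to accept (for $f=x_{i}$ with $a_{i}=0$ one has $F\equiv 0$; more generally $F_a$ collapses to an exponentially small constant whenever the coordinates with $a_{i}=1$ carry negligible weight of $f$), so the error event cannot be dismissed by an unconditional appeal to anti-concentration of $F$. What actually must be ruled out is an adversarial interaction between $t^{6}f_{2}(r)$ inside $F$ and $2t^{5}\delta B(r,\omega)$ inside $G$ --- exactly the ``interference across scales'' that your final paragraph defers as ``the most delicate part of the calculation''. That deferred part is the content of the lemma, not a routine quantification.

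The paper resolves this with a mechanism that has no counterpart in your sketch. It assumes the negation ($A\le 0$ or $|I(f)|>\beta^{-2}$), which yields one-dimensional Gaussian anti-concentration of the \emph{linear} form $f_{1}(r)$ away from $(-\delta A,\delta A)$ (no Carbery--Wright needed); then, for each good $r$, an averaging argument produces $\Theta(\log n)$ good scales $t\in T_{r}$ at which three of the four sign constraints \eqref{eq:a}--\eqref{eq:d} hold simultaneously, and a deterministic linear-algebra step (Cramer's rule on the Vandermonde-type system in $C=f_{2}(r)$, $D=\delta f_{2}'(r)$, $E=\delta^{2}B$, $\theta$ at five scales $t_{0}<\dots<t_{4}$) forces these quantities to be so small that the retained constraint at scale $t_{1}$ cannot hold, giving the contradiction. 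If you wish to keep the $F+bG$ formulation, you still need some substitute for this multi-scale step to handle the regime where $\|F_a\|_{2}$ is tiny; merely averaging Carbery--Wright bounds over the $(\log n)^{2}$ values of $t$ does not supply it.
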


\begin{proof}
The proof is by contradiction. Let  $f$ be a degree-2 polynomial with $|I(f)| > 1/\beta^2$ or $A\leq 0$, and suppose that $f$ passes the test with probability at least $\half + \beta$.

First we show the following lemma.
\begin{lemma}
    $\Pr_r[f_1(r) \in (-\delta A, \delta A )] \leq \frac{2}{n}.$
\end{lemma}
\begin{proof}
The inequality obviously holds for $A \leq 0$ since the interval has measure 0.  Thus we may assume that $A> 0$ and $|I(f)|\ge
1/\beta^{2}$. We know that in step 1 when generating the bit-vector $a$,  with probability at least $1 - (1-\beta)^{|I(f)|}
\geq 1-\frac{1}{n}$ at least one of the coordinates in $I(f)$ has its bit $a_i$ nonzero.  Fix any such outcome for the
bit-vector $a$; now considering the random choice of the Gaussians $g_1,\dots,g_n$, we have that the resulting $f_1(r)$ is a
Gaussian variable with variance at least $A^2/n^4$ (as one of the weights is at least $A/n^2$).  Using the standard fact that
an $N(\sigma,\mu)$ Gaussian random variable puts probability mass at most $t/\sigma$ on any interval of length $t$,  we have
that for such an outcome of the $a$-vector,
        \[
                \Pr_{g}[f_1(r) \in (-\delta A, \delta A)]\leq \frac{2\delta A} {A/n^2} \leq \frac{n^3}{2^n} \leq \frac{1}{n}.
        \]

Now a union bound gives that  for at most $\frac{2}{n}$ of the $r$ generated, $f(r)$ is  inside the interval $(-\delta A, \delta
A)$.
\end{proof}

Now we observe that for any outcome $r$, the vectors $r$ and $-r$ are generated with equal probability.  Thus an equivalent test to $\calT_2$ would be to generate $r,t$ as described by the test and then check a randomly selected one of the following four inequalities:
\begin{eqnarray}
\label{eq:a} f(t^{3}r+t^{2}\delta \omega )  \geq 0 \\
\label{eq:b} f(t^{3}r-t^{2}\delta \omega) < 0\\
\label{eq:c} f(-t^{3}r +t^{2} \delta \omega ) \geq 0 \\
\label{eq:d} f(-t^{3}r - t^{2} \delta \omega )  < 0.
 \end{eqnarray}

 Since $f$ is assumed to pass the test with probability $\half + \beta$ an averaging argument gives that for a $\beta/2$ fraction of the possible outcomes of $r$, at least a $(\half
+\beta/2)$ fraction of all the constraints involving that $r$ outcome are satisfied. (Note that for any fixed outcome of $r$ there are
$4(\log n)^2$ constraints, corresponding to inequalities (\ref{eq:a})--(\ref{eq:d}) for each of the $(\log n)^2$ possible
values of $t.$)  For this $\beta/2$ fraction of $r$, let us  remove those outcomes $r$ such that
$p_1(r) \in (-\delta A , \delta A)$ (recall that this is at most a $2/n$ fraction of all $r$-outcomes).  Recalling that $\beta = \frac{1}{\log n}$, we know there are at least $\beta/4$ fraction of
$r$-outcomes remaining; we call these ``good'' $r$'s.

Let us  fix a good  $r$. By an averaging argument again, for any  ``good'' $r$, for at least a $ \beta/4$ fraction of the possible outcomes of $t$, at least $3$ out of the $4$ of the inequalities that contain $t$ and $r$ are satisfied. There are $4$
different ways of choosing $3$ out of the $4$ constraints. Without loss of generality, let us assume that for a $\beta /16$
fraction of the $t$-outcomes, the first, second, and fourth constraints (\ref{eq:a}), (\ref{eq:b}) and (\ref{eq:d}) are satisfied. That
is:
\begin{gather}
\label{eqn1}f(t^{3}r+t^{2}\delta \omega )  > 0 \\
 \label{eqn2}f(t^{3}r-t^{2}\delta \omega) < 0\\
\label{eqn3} f(-t^{3}r -t^{2} \delta \omega ) < 0.
 \end{gather}
Let us call these  $t$ ``good" for the corresponding $r$, and let us denote the set that contains all the ``good'' $t$ for a given
``good'' $r$ by  $T_r$. Since the possible choice of $t=n^i$ ranges over all $i \in [\log^2 n]$, we therefore obtain $|T_r|\geq (\log n)^{2}\cdot \beta/16 = \Theta(\log n)$.

Since $f(x)$  is a degree $2$ polynomial, we can express $f(r+\delta \omega)$ as:
\begin{multline*}
 f(r+\delta \omega )  =\theta + f_1(r) +f_2(r) \\ + \delta \sum_{i=1}^n c_{i} + \delta^{2} \sum_{1 \leq i \leq j \leq n} c_{ij}  + \delta \sum_{1\leq i \leq j \leq n} c_{ij} (r_i+r_j).
\end{multline*}

Let us denote $B = \sum_{1\leq i \leq j\leq n} c_{ij}$ and $f'_2(r) = \sum_{1\leq i \leq j\leq n} c_{ij} (r_i+ r_j)$. We can rewrite \eqref{eqn1},
\eqref{eqn2}, \eqref{eqn3} as:
\begin{gather}
\label{eqn:org1}        t^3 f_1(r) + t^2\delta A+ t^6 f_2(r) + t^5\delta f'_2(r) + t^4 \delta^2 B +\theta \geq 0 \\
\label{eqn:org2}        t^3 f_1(r) - t^2\delta A+ t^6 f_2(r) - t^5\delta f'_2(r) + t^4 \delta^2 B + \theta  < 0 \\
\label{eqn:org3}        t^3 f_1(r) + t^2\delta A-t^6 f_2(r) - t^5\delta f'_2(r) - t^4 \delta^2 B -\theta > 0
\end{gather}

Notice that \eqref{eqn:org1} and \eqref{eqn:org3} yield
\[
        f_1(r) \geq -\delta A / t + \big| t^3 f_2(r)+\delta t^2 f'_2(r)+  \delta^2 t B + \theta/t^3 \big|.
\]

Since we already know that $f_1(r)\notin (-\delta A , \delta A)$ and $t$ is at least 1, we get that
\[
 f_1(r) \geq \delta A.
\]

Also for \eqref{eqn:org2}, we can rewrite it as
\[
        f_1(r) \leq \delta A /t - (t^3 f_2(r) - \delta t^2 f'_2(r) + \delta^2 t  B + \theta /t^3).
\]

Let us further simplify the notation by writing $C$ for
$f_2(r)$, $D$ for $\delta f_2'(r)$ and $E$ for $\delta^2 B$. Then we may rewrite the
above constraints as follows:
\[
        f_1(r) \geq -\delta A / t + \big| t^3 C + t^2 D+  t E + \theta/t^3 \big|
\]
and
\begin{equation}
\label{eq:star}      \delta A \leq  f_1(r) \leq \delta A /t - (t^3 C -  t^2 D + t  E + \theta /t^3).
\end{equation}

Notice that above (upper and lower) bound hold for any $t$ in $T_r$. Therefore, we know that for any $t_1,t_2\in T_r$,
\begin{multline*}
        \delta A /t_1 - (t_1^3 C -  t_1^2 D +  t_1  E + \theta /t_1^3) \\ \geq  -\delta A / t_2 + \big| t_2^3 C + t_2^2 D+  t_2 E + \theta/t_2^3 \big|
\end{multline*} which  is equivalent to
\begin{multline}\label{eqn:ul}
         - (t_1^3 C -  t_1^2 D +  t_1  E + \theta /t_1^3) + \delta A (\frac{1}{t_1} + \frac{1}{t_2}) \\ \geq  \big| t_2^3 C + t_2^2 D+  t_2 E + \theta/t_2^3 \big|.
\end{multline}

Using the fact that $f_1(r) > \delta A$, the inequality (\ref{eq:star}) gives  $- (t_1^3 C -  t_1^2 D + t_1  E + \theta /t_1^3)
> (1-\frac{1}{t_1})\delta A,$ which may be rewritten as $\delta A \leq {\frac {- (t_1^3 C -  t_1^2 D + t_1  E + \theta
/t_1^3)}{1-1/t_1}}.$   Combining this with \eqref{eqn:ul}, we know that for any $t_1,t_2 \in T_r$, we have
\begin{multline*}
- (t_1^3 C -  t_1^2 D + t_1 E + \theta /t_1^3)  \left(1 + \frac{(\frac{1}{t_1} + \frac{1}{t_2})}{1-\frac{1}{t_1}}\right) \\ \geq \big| t_2^3
C + t_2^2 D+  t_2 E + \theta/t_2^3 \big|.
\end{multline*}
By definition, $t_i \geq n$ for any $i$, so we have $\frac{(\frac{1}{t_1} + \frac{1}{t_2})}{1-\frac{1}{t_1}}\leq 3/n$.
Therefore, for any $t_1,t_2$ in $T_r$, the following inequality holds:
\begin{equation}\label{eqn:big}
\frac{- (t_1^3 C +  t_1^2 D-  t_1  E + \theta /t_1^3)}{\big| t_2^3 C + t_2^2 D+  t_2 E + \theta/t_2^3 \big|} \geq \frac{1}{1 +
\frac{(\frac{1}{t_1} + \frac{1}{t_2})}{1-\frac{1}{t_1}}  } \geq 1-3/n.
\end{equation}
Note that the denominator of the LHS of (\ref{eqn:big}) can be zero for at most 6 values of $t_2$; we eliminate any such values from $T_r$, and we still have $|T_r| \geq \Theta(\log n)$.
(Actually, we
will only need $|T_r| \geq 5$ for the remainder of the argument to establish the required contradiction.)  Let us pick  $t_0 < t_1 < t_2 < t_3 < t_4$ from  $T_r$, and let us write $G$ to denote $- (t_1^3 C -
t_1^2 D + t_1  E + \theta /t_1^3)$. We know that \[G \leq t_1^3|C| +  t_1^2 |D| + t_1  |E| + |\theta| /t_1^3. \]

Also for $t_0,t_2,t_3,t_4$, we write:
\begin{gather}
\label{eqn:1}        F_0 := t_0^3 C - t_0^2 D+  t_0 E + \theta/t_0^3  \\
\label{eqn:2}        F_2 := t_2^3 C - t_2^2 D+  t_2 E + \theta/t_2^3 \\
\label{eqn:3}        F_3 := t_3^3 C - t_3^2 D+  t_3 E + \theta/t_3^3 \\
\label{eqn:4}        F_4 := t_4^3 C - t_4^2 D+  t_4 E + \theta/t_4^3.
\end{gather}
Let $F$ denote $\max_{i=0,2,3,4} |F_i|$. By \eqref{eqn:big} we know that
\begin{equation}\label{eqn:big1}
\frac{G}{F}\geq 1-3/n.
\end{equation}
Viewing $C,D,E,\theta$ as unknowns, we may solve the above linear system consisting of equations
\eqref{eqn:1},\eqref{eqn:2},\eqref{eqn:3},\eqref{eqn:4} using Cramer's rule.  We find that
\[
        C=\frac{\left|{\begin{array}{cccc}
F_0 & -t_{0}^2 & t_0 & 1/t_0^3 \\
F_2 & -t_{2}^2 & t_2 & 1/t_2^3\\
F_3 & -t_{3}^2 & t_3 & 1/t_3^3 \\
F_4 & -t_{4}^2 & t_4 & 1/t_4^3 \\
\end{array}}\right|}{\left|\begin{array}{cccc}
t_{0}^3 & -t_{0}^2 & t_0 & 1/t_0^3 \\
t_{2}^3 & -t_{2}^2 & t_2 & 1/t_2^3\\
t_{3}^3 & -t_{3}^2 & t_3 & 1/t_3^3 \\
t_{4}^3 & -t_{4}^2 & t_4 & 1/t_4^3 \\
\end{array}\right|} =
\frac{\left|{\begin{array}{cccc}
F_0 & t_{0}^2 & t_0 & 1/t_0^3 \\
F_2 & t_{2}^2 & t_2 & 1/t_2^3\\
F_3 & t_{3}^2 & t_3 & 1/t_3^3 \\
F_4 & t_{4}^2 & t_4 & 1/t_4^3 \\
\end{array}}\right|}{\left|\begin{array}{cccc}
t_{0}^3 & t_{0}^2 & t_0 & 1/t_0^3 \\
t_{2}^3 & t_{2}^2 & t_2 & 1/t_2^3\\
t_{3}^3 & t_{3}^2 & t_3 & 1/t_3^3 \\
t_{4}^3 & t_{4}^2 & t_4 & 1/t_4^3 \\
\end{array}\right|}
.
\]
Since $0 < t_0<t_2<t_3<t_4$ and these values are at least a factor of $n$ apart from each other, we have that
\[
\left|\begin{array}{cccc}
t_{0}^3 & t_{0}^2 & t_0 & 1/t_0^3 \\
t_{2}^3 & t_{2}^2 & t_2 & 1/t_2^3\\
t_{3}^3 & t_{3}^2 & t_3 & 1/t_3^3 \\
t_{4}^3 & t_{4}^2 & t_4 & 1/t_4^3 \\
\end{array}\right|\]
is $\Omega(t_4^3t_3^2t_2t_0^{-3})$.

Since $F = \max_{i=0,2,3,4} |F_i|$, we know that the absolute value of
\[
\left|{\begin{array}{cccc}
F_0 & t_{0}^2 & t_0 & 1/t_0^3 \\
F_2 & t_{2}^2 & t_2 & 1/t_2^3\\
F_3 & t_{3}^2 & t_3 & 1/t_3^3 \\
F_4 & t_{4}^2 & t_4 & 1/t_4^3 \\
\end{array}}\right|
\]
is at most $ O(F t_4^2t_3t_0^{-3}).$
Thus we have  $|C| = O(\frac{F}{t_4t_3t_2}).$

Similar analysis shows that
\[      |D| = O({F}/{t_3t_2}); \quad
        |E| = O({ F}/{t_2}); \text{and }
        |\theta| =  O(F t_0^3).
\]

Therefore, we have
\begin{multline*}
G \leq |C| t_1^3 + t_1^2 |D| + t_1 |E| + |\theta|/ t_1^3
 \\ \leq F \cdot O(t_1^3/t_4t_3t_2+ + t_1^2/t_2t_3 + t_1/t_2 + t_0^3/t_1^3).
%\leq O (F(t_1^2/t_2 t_3 + t_1/t_2 + t_1/t_3 + t_0^3/t_1^3)).
\end{multline*}
Recalling that $t_{i+1}/t_i\geq n$ as they are different powers of $n$, we have that
\[
\frac{G}{F}     \leq O({1}/{n}).
\]
This contradicts \eqref{eqn:big1} and concludes the proof of the soundness Lemma,
Lemma~\ref{lem:deg2tsd}.
\end{proof}

\subsection{Hardness reduction from Label Cover} \label{sec:lc2}

Recall that our reduction is from a Label Cover instance $\calL$ specified by $(U,V,E,k,m,\Pi)$ . For notational convenience let us write $F(q)$ to denote the space of possible labels for vertex $q \in U\cup V$, for $u\in U$, $F(u) $ denotes $[k]$ and for $v\in V$, $F(v)$ denotes $[m]$.

We reduce to a learning problem with labeled examples in  $\R^{|U|k+ |V|m} \times \{-1,1\}.$  Let $\dim$ denote
$|U| k+ |V| m$. For $y\in \R^{\dim}$ and $q\in U\cup V$, we write $y_{q}^{(i)}$ to denote the vector consisting of all coordinates that correspond to vertex $q$, i.e. $y_u$ denotes $(y^{(i)}_u)_{i \in [k]}$ for $u \in U$ and
$y_{v}$ denotes $(y^{(i)}_v)_{i \in [m]}$ for $v \in V.$

We give the reduction from Label Cover to the learning problem below.  The high level idea is that the Dictator Test $\calT_2$ is performed on the restricted function $p_v(y)$ for a random $v\in V$.

%Let us choose the parameter $\beta = \frac{1}{\log m}$ and $\delta = \frac{1}{2^m}$.

\begin{center}
\fbox{
\parbox{3 in}{
\begin{center}
\textbf{Reduction from Label-Cover $\calL$}
\end{center}

{\bf Input:} Label Cover Instance $(U,V,E,k,m,\Pi)$.

\begin{enumerate}
\item Randomly pick a vertex $v \in V$.
\item  For each $w \neq v,$ $w \in U\cup V$, set $y_w = 0$.
\item Let $a_1,\dots,a_m$ be independent $\{0,1\}$ bits each with $\E[a_i] = \beta.$  Let $g_1,\dots,g_m$ be
 independent $N(0,1)$ Gaussian random variables.  Let $i$ be chosen uniformly from $[(\log m)^{2}]$ and set $t = m^i$.  Let $b$ be a random uniform bit from $\{-1,1\}$.
\item Set $r = (a_1 g_1, a_2 g_2,\ldots,a_m g_m).$
\item Let  $\omega \in \R^{m}$ be $\omega = (1,\dots,1)$, and set $y_v:=  t^3 r+ b t^2 \delta \omega$.

\item Output the labeled example $(\Fold(y_v),b)$  (we describe the folding procedure $\Fold(\cdot)$ later).
\end{enumerate}
} }
\end{center}
The learning problem is to find a degree 2 polynomial $p: \R^{dim}\to \{-1,1\}$ such that $\sgn(p(y)) = b$ for the largest possible
fraction of labeled examples generated as described above. Let us denote
\begin{multline*}
p(y) = \theta + \sum_{q\in U\cup V, i \in F(q)} c_{q}^{(i)} y_{q}^{(i)} \\ + \sum_{q_{1},q_{2}\in U\cup V, i \in F(q_{1}, j\in F(q_{2})} c_{(q_{1},q_{2})}^{(i,j)} y_{q_{1}}^{(i)}y_{q_{2}}^{(j)}.
\end{multline*}

Notice that in the reduction, when vertex $v$ is picked we set all the coordinates to zero except $y_v$. Essentially  we are performing the test $\calT_{2}$ on  the function $$p_v = \theta + \sum_{i\in [m]} c_{v}^{(i)} y_{v}^{(i)} + \sum_{i,j \in [m]}
c_{(v(i),v(j))} y_{v}^{(i)}y_{v}^{(j)} $$ which is the restriction of $p(y)$ obtained by setting all the coordinates to zero except those coordinates corresponding to vertex $v$.  The overall fraction of agreement  of  $p(y)$ on all   examples is the average probability, over all $v \in V$, that $p_v$ passes $\calT_{2}$.

\paragraph{Folding Trick:} We use the ``folding ''
technique that was first introduced in \cite{GKS10journal,KS08}.  The trick essentially amounts to the following: instead of outputting the labeled example
$(y,b)$ in the last step of the reduction, we output $(\Fold(y),b)$ where $\Fold(y)$ is the projection of $y$ into a subspace $H^{\perp}$ (defined below). Folding enables us to enforce that $p$ takes the same value on different points in $\R^{\dim}$ as long as they project to the same point in $H^{\perp}$.

We define the subspaces $H,H^{\perp}$ for our folding as follows:
\begin{definition}
For every  $e = \{u,v\}\in E, i \in [k]$, we define $b(e,i)\in \R^{\dim}$ to be the vector that has $0$ at every coordinate except that
${b(e,i)}_{u}^{(i)}$ is $1$ and for every  $j\in (\pi^{e})^{-1}(i)$, $b(e,j)_{v}^{(j)}$ is $-1$.  Let $B$ be the collection of
all such $b(e,i)$, i.e.   $B = \{b(e,i)\ |\ e = \{u,v\}\in E, i \in [k]\}$. We define $H$ to be $span(B)$ and
$H^{\perp}$ to be the orthogonal complement of $H$ in $R^{dim}$.
\end{definition}

We define $\Fold(y)$ to be the projection of $y$ onto $H^\perp.$  It is easy to see that the mapping $\Fold(\cdot)$ can be performed in polynomial time.

After the folding  procedure, we can further enforce $p(x)$ to have the  property:%\rnote{this badly needs rephasing -- discuss}
\[
\text{For any $h\in H$ and } x\in \R^{\dim}, p(x+h) = p(x).
\]
%\[
%\text{For any $h\in B$ and $c\in \R$ and } x\in \R^{dim}, p(x+ch) = p(x).
%\]
We call functions that have the above property ``folded''. In particular for  $e = \{u,v\}\in E$, $c \in \R$, and $i\in [k]$, a folded function $p$ satisfies $p(x + c b(e,i)) =
p(x)$. If we view $p(y)$  as a polynomial only on $y_{u}^{(i)}$ and $y_{v}^{(j)}$ for $j\in (\pi^{e})^{-1}(i)$, then Lemma~\ref{lem:fd} shows that we have the
following folding property of $p$:  $$c_{u}^{(i)} = \sum_{j\in (\pi^e)^{-1}}c_{v}^{(j)}.$$

If we sum over all possible $i$, this implies for any edge $\{u,v\}$, we have
$$\sum_{i \in [k]} c_{u}^{(i)} = \sum_{i\in [m]} c_{v}^{(i)}.$$
\ignore{Without loss of generality, we can assume for any $q\in U\cup V$, $$\sum_{i \in s[q]} c_{q}^{(i)}$$ is the same.
\ynote{do we need to do addintional folding or assuming the graph is connected}
}

\noindent Now we are ready to prove Theorem~\ref{thm:degree2}.
We  will show  the following two properties of the reduction to complete the proof.

%Recall the hardness  result of Label-Cover as follows~\cite{Raz98}:
%\paragraph{Theorem \ref{thm:hd}}  There exists some constant $\eta$ such that it is $\NP$-Hard to distinguish the %following two cases:
%\begin{itemize}
%\item $\opt(\calL) =1$;
%\item $\opt(\calL) \leq 1/m^{\eta}$.
%\end{itemize}

\begin{lemma}[Completeness]\label{thm:cp}
If $ \opt(\calL) = 1$, then there is a  folded function $p(x)$ that is consistent with $1-1/\log m$ fraction of the labeled
examples generated by the reduction.
\end{lemma}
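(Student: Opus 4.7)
The plan is to exhibit an explicit halfspace (i.e.\ degree-$1$, hence degree-$2$, PTF) whose sign-polynomial is folded and which, when evaluated on the examples produced by the reduction, essentially coincides with the dictator hypothesis for the appropriate coordinate at each chosen vertex $v \in V$; the completeness will then follow directly from the completeness of the dictator test $\calT_2$ (Lemma~\ref{lem:tcp}).

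Concretely, since $\opt(\calL)=1$, fix a labeling $\ell \from U \cup V \to [k]\cup [m]$ satisfying every edge, i.e.\ $\pi^e(\ell(v)) = \ell(u)$ for all $e=\{u,v\}\in E$. I would then define
\[
p(y) \;:=\; \sum_{u \in U} y_{u}^{(\ell(u))} \;+\; \sum_{v \in V} y_{v}^{(\ell(v))},
\]
and take the hypothesis to be $\sgn(p(\cdot))$. The first step is to verify that $p$ is folded. Since $p$ is linear, it suffices to check $p(b(e,i)) = 0$ for every generator $b(e,i)$ of $H$. Using $\pi^e(\ell(v))=\ell(u)$, when $i = \ell(u) = \pi^e(\ell(v))$ the $u$-term contributes $+1$ and the $v$-term contributes $-1$, and when $i \neq \ell(u)$ both terms vanish; hence $p(b(e,i))=0$ for all edges $e$ and all $i \in [k]$, so $p$ is folded and $p(\Fold(y)) = p(y)$ for every $y\in \R^{\dim}$.

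The second step is to compute $p(y)$ on a random example. Since the reduction zeroes all coordinates outside the chosen vertex $v$, we have $p(y) = y_v^{(\ell(v))} = t^3 a_{\ell(v)} g_{\ell(v)} + b\, t^2 \delta$. Whenever $a_{\ell(v)}=0$ (which happens with probability $1-\beta$ over the choice of the bit-vector), this reduces to $b\, t^2\delta$, whose sign equals $b$. This is exactly the completeness argument for $\calT_2$ applied to the restriction $p_v(\cdot) = y_v^{(\ell(v))}$, i.e.\ a dictator.

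Averaging over the uniform choice of $v \in V$ and the remaining randomness, the hypothesis $\sgn(p(\Fold(y))) = \sgn(p(y))$ agrees with the label $b$ on at least a $1-\beta = 1 - 1/\log m$ fraction of the examples. I do not anticipate any serious obstacle here: the only thing that requires real care is the folding verification, which works precisely because the labeling satisfies every edge of $\calL$; if we only had a near-satisfying labeling, the additive correction on the $U$-side would fail to cancel on the unsatisfied edges, and one would have to argue that a $(1-o(1))$-folded candidate can be foldified without losing too much agreement.
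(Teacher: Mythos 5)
Your proposal is correct and follows essentially the same route as the paper: the same candidate $p(x)=\sum_{w\in U\cup V}x_w^{(\ell(w))}$ built from a perfectly satisfying labeling, agreement $\geq 1-\beta$ via the dictator-test completeness (Lemma~\ref{lem:tcp}) applied to each restriction $p_v$, and the observation that $p$ is folded. The only difference is cosmetic: you spell out the folding check $p(b(e,i))=0$ (which indeed uses $\pi^e(\ell(v))=\ell(u)$), whereas the paper simply asserts it is easy to verify.
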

\begin{lemma}[Soundness] \label{thm:sd} If $\opt(\calL)\leq 1/m^{\eta}$, then there is no  folded degree-$2$ polynomial
that is consistent with $1/2+\frac{2}{\log^{2}m}$ fraction of the labeled examples generated by the reduction.
\end{lemma}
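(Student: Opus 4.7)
The plan is to argue by contradiction. Assume a folded degree-$2$ polynomial $p:\R^{\dim}\to\R$ is consistent with a $\tfrac{1}{2}+\tfrac{2}{\log^{2}m}$ fraction of the labeled examples produced by the reduction; I will extract from $p$ a randomized labeling of $\calL$ whose expected fraction of satisfied edges is $\Omega(1/\polylog(m))$, which exceeds $1/m^{\eta}$ for all sufficiently small constant $\eta>0$ and sufficiently large $m$, contradicting $\opt(\calL)\leq 1/m^{\eta}$.

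The first step is an averaging argument. Conditioned on the first-step choice of $v\in V$ in the reduction, the remaining randomness produces exactly a draw from the Dictator Test $\calT_{2}$ applied to the $m$-variable restriction $p_{v}$ (folding only projects the example onto $H^{\perp}$, but since $p$ is folded this leaves the pass probability unchanged). Writing $\alpha_{v}$ for that conditional pass probability, the assumption reads $\Ex_{v}[\alpha_{v}]\geq\tfrac{1}{2}+\tfrac{2}{\log^{2}m}$, so a Markov-style averaging extracts an $\Omega(1/\polylog(m))$ fraction of \emph{good} vertices $v\in V$ for which $\alpha_{v}$ exceeds the threshold in the hypothesis of Lemma~\ref{lem:deg2tsd}. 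For every good $v$, that lemma then yields $A_{v}=\sum_{i}c_{v}^{(i)}>0$ together with a short candidate list $I(p_{v})\subseteq[m]$ of size at most $1/\beta^{2}=\polylog(m)$.

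In the second step I define the randomized labeling $\ell$ of $\calL$: for every good $v\in V$, sample $\ell(v)$ uniformly from $I(p_{v})$ (label other $v$'s arbitrarily), and for every $u\in U$, sample $\ell(u)\in[k]$ independently and uniformly at random. For any edge $e=(u,v)$ with $v$ good, the probability $\pi^{e}(\ell(v))=\ell(u)$ is exactly $1/k$, so summing over all edges touching good vertices the expected satisfied fraction is $\Omega(1/(k\cdot\polylog(m)))$. Choosing the Label Cover instance from Theorem~\ref{thm:labelcover} with alphabet size $k$ sufficiently smaller than $m^{\eta}$ (which Raz's theorem permits for any positive constant $\eta$) makes this expected fraction strictly greater than $1/m^{\eta}$, yielding the contradiction. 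The folding of $p$, which gives the coefficient identity $c_{u}^{(i)}=\sum_{j\in(\pi^{e})^{-1}(i)}c_{v}^{(j)}$ for every edge $e=\{u,v\}\in E$ and every $i\in[k]$, is the feature that allowed us to design a \emph{single-vertex} dictator test and hence to bypass the UGC; no explicit use of this identity is required in the decoding above.

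The main obstacle is calibrating the parameters so that the averaging step actually produces good vertices in the sense of Lemma~\ref{lem:deg2tsd}: the global advantage $2/\log^{2}m$ is a factor $\log m$ smaller than the per-vertex threshold $\beta=1/\log m$ that appears in that lemma as written. To close this gap, the reduction should be run with $\calT_{2}$ modified to use $\beta$ of order $1/\log^{2}m$ (and correspondingly more exponents $t=m^{i}$ in step 2 of $\calT_{2}$, so that $|T_{r}|\geq 5$ continues to hold and the Cramer's-rule computation inside the proof of Lemma~\ref{lem:deg2tsd} still goes through); this inflates $|I(p_{v})|$ to $\polylog(m)$ but leaves the rest of the analysis intact. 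With this calibration, the Markov averaging yields the promised $\Omega(1/\polylog(m))$ fraction of good vertices and the proof is complete.
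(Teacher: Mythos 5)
There is a genuine gap, and it is fatal to the argument as proposed: you never decode labels for the $U$-side vertices, and the uniform-random labeling of $U$ cannot produce a contradiction. Since every Label Cover instance admits a labeling satisfying a $1/k$ fraction of edges in expectation (fix any labeling of $V$ and label each $u$ uniformly from $[k]$; each edge is then satisfied with probability exactly $1/k$), we always have $\opt(\calL)\geq 1/k$, and hence the soundness promise $\opt(\calL)\leq 1/m^{\eta}$ forces $k\geq m^{\eta}$ in every instance to which the lemma applies. Your plan to ``choose the Label Cover instance with $k$ sufficiently smaller than $m^{\eta}$'' is therefore impossible --- Theorem~\ref{thm:labelcover} cannot supply such instances, because for them the gap problem would be vacuous. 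Consequently your decoded labeling, which satisfies each edge with probability exactly $1/k$ regardless of how $\ell(v)$ is chosen, yields only the trivial bound $\opt(\calL)\geq \Omega(1/(k\cdot\mathrm{polylog}(m)))$, which is \emph{below} $1/m^{\eta}$ and contradicts nothing. Your parenthetical claim that ``no explicit use of the folding identity is required in the decoding'' is exactly where the proof breaks: the folding identity is the only mechanism in this reduction by which information about $p$ propagates to the $U$-side (the test never queries $u$-coordinates), and without it there is no nontrivial way to label $U$.

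The paper's proof uses folding precisely at this point. For each $u\in U$ it defines $J_u=\{j\in[k]\mid c_u^{(j)}\geq \sum_{i\in[k]}c_u^{(i)}/k\}$, which is nonempty, and decodes $\ell(u)$ from $J_u$ and $\ell(v)$ uniformly from $I_v$. For a good edge $e=(u,v)$, folding gives $c_u^{(j)}=\sum_{i\in\pi_e^{-1}(j)}c_v^{(i)}$ and, summing over $j$, $\sum_{i\in[k]}c_u^{(i)}=\sum_{i\in[m]}c_v^{(i)}=A_v$; combined with $A_v>0$ (supplied by Lemma~\ref{lem:deg2tsd} along with $|I_v|\leq(\log m)^2$), every $j\in J_u$ has some preimage $i\in\pi_e^{-1}(j)$ with $c_v^{(i)}\geq A_v/(km)\geq A_v/m^2$, i.e.\ $i\in I_v$. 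Hence each good edge is satisfied with probability at least $1/|I_v|\geq 1/(\log m)^2$, giving at least a $1/(\log m)^3$ fraction of all edges satisfied and a genuine contradiction with $\opt(\calL)\leq 1/m^{\eta}$ once the constant $m$ is large enough. Your observation about the mismatch between the stated advantage $2/\log^2 m$ and the per-vertex threshold $\beta=1/\log m$ of Lemma~\ref{lem:deg2tsd} does point at a real calibration wrinkle (the paper's proof in fact argues from advantage $2/\log m$), and your proposed recalibration of $\beta$ is a plausible repair for that secondary issue; but it does not address the missing $U$-side decoding, which is the heart of the soundness argument.
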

Combining Lemmas~\ref{thm:cp} and \ref{thm:sd} and noticing that $m$ can be an arbitrarily large constant (such as
$e^{1/\eps^2}$),we obtain Theorem~\ref{thm:degree2}.  (A discretization similar to that of Section~\ref{sec:samplg} is also
required, and can be obtained in a routine way by slightly modifying the parameters of that section's construction.)

%It remains only to prove Theorems~\ref{thm:cp} and~\ref{thm:sd}.

\paragraph{Proof of Theorem~\ref{thm:cp}:}   Suppose that $\opt(\calL) = 1$, so there is a labeling $l$ satisfying all the edges.  Then consider the following function   $$p(x) = \sum_{w\in U\cup V} x_{w}^{(l(w))}.$$  For every $v\in V$, the function $p_{v}$ is a dictator and passes $\calT_{m}$ with probability at least $1-\frac{1}{\log m}$ by Lemma~\ref{lem:tcp}. Consequently the overall probability that $p$ passes the test is at least
$1 - 1/\log m$.  Finally, it is easy to check that thus function $p(x)$ is folded. \qed

\paragraph{Proof of Theorem~\ref{thm:sd}:}
Suppose that there is some folded degree-2 polynomial $p(x)$ such that $\sgn(p(x))$ agrees with more
than $\half + \frac{2}{\log m}$  fraction of the example, i.e., the averaging passing probability of $p_v$ on $\calT_m$ is
$\half + \frac{2}{\log m}$. We will show that $\opt(\calL) > 1/m^\eta$ and thus prove the theorem.

By an averaging argument,  we know for a $\frac{1}{\log m}$ fraction of the vertices $v\in V$, the restricted polynomial $p_v$ passes the test $\calT_{k}$  with probability at laest $\half+\frac{1}{\log m}$; we refer to any such $v$ as a ``good'' vertex.  We say that an edge is ``good'' if the $V$-endpoint of the edge is a good vertex. Since the graph is regular, we know that at least a $\frac{1}{\log m}$ fraction of all edges are ``good''.

For a ``good'' vertex $v$, let us define $I_v$ to be $$I_v = \{j | \  j\in [m], c_{v}^{(j)}
> \sum_{i=1}^{m} c_{v}^{(i)}/m^{2}\}.$$ By Lemma~\ref{lem:deg2tsd}, we have $|I_{v}| \leq (\log m)^{2}$ and  $\sum_{i\in [m]} c_{v}^{(i)}> 0$.
For every $u\in  U$, we define $J_u = \{j | \ j\in [k], c_{u}^{(j)} \geq \sum_{i\in[k]} c_{u}^{(i)}/k\}$.
We note that $J_u$ is not empty as $$ \max_{j} c_{u}^{(j)}\geq \sum_{i\in[k]} c_{u[i]}/k.$$

We define the following labeling strategy for $\calL$. For $u\in U$, randomly assign it a label from $J_u$;  for $v\in V$, randomly assign it a label from $I_v$ (if $I_v$ is empty, we assign a random label to $v$).

For every good edge $e=(u,v)$ and any $j\in J_u$, since $p$ is folded, we have that
\[
       \sum_{i\in\pi_e^{-1}(j)}c_{v}^{(i)}=c_{u}^{(j)} \geq \sum_{i\in [k]} c_{u}^{(i)} / k= \sum_{i\in [m]} c_{v}^{(i)}/k.
\]
There is at least one label $i$ in $\pi_e^{-1}(j)$ such that $ \sum_{i\in [m]} c_{v}^{(i)}/km  \geq  \sum_{i\in [m]}
c_{v}^{(i)}/m^{2} $, and this label is therefore in $I_v$. As noted earlier we have  $|I_{v}| \leq (\log m)^{2}$, and so by our randomized labeling strategy there is at least a $1/(\log m)^{2}$ probability that edge $\{u,v\}$ is satisfied.

Therefore the above labeling strategy satisfies (in expectation) at least $1 /(\log m)^2$ fraction of the good edges and consequently at least $ 1/(\log m)^{3} $ fraction of all edges.  This means that $\opt(\calL) > 1/m^\eta$ and the proof is complete. \qed

\subsubsection{Folding Lemma}

\begin{lemma} \label{lem:fd}
Let
$$f(x) = \theta + \sum_{i=0}^{n}   w_i x_i + \sum_{0\leq
i\leq j\leq n}w_{ij}x_ix_j$$
be a degree 2 function. Suppose that for every $x \in \R^n, c \in \R$ we have $f(x+c(1,-1,\ldots,-1)) = f(x)$.
Then $w_0 = \sum^n_{i=1} w_i$.

\end{lemma}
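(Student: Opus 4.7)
My plan is to extract the claim directly from the invariance condition by evaluating at a single well-chosen point, namely $x = 0$. The intuition is that $w_0 = \sum_{i=1}^n w_i$ is purely a statement about the linear part of $f$, so we only need the component of the invariance that picks out the coefficient of $c$ (the quadratic-in-$c$ part gives information about the $w_{ij}$'s, which we do not care about here).

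Concretely, setting $x = 0$ in the hypothesis gives $f(c \cdot v) = f(0) = \theta$ for all $c \in \R$, where $v = (1, -1, -1, \ldots, -1) \in \R^{n+1}$. Expanding,
\begin{equation*}
f(cv) = \theta + c \sum_{i=0}^n w_i v_i + c^2 \sum_{0 \le i \le j \le n} w_{ij} v_i v_j.
\end{equation*}
This is a polynomial in $c$ that is identically equal to $\theta$, so every coefficient of a positive power of $c$ must vanish. In particular, the coefficient of $c$ gives $\sum_{i=0}^n w_i v_i = 0$, which with $v_0 = 1$ and $v_i = -1$ for $i \ge 1$ reads $w_0 - \sum_{i=1}^n w_i = 0$, i.e. $w_0 = \sum_{i=1}^n w_i$.

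There is no real obstacle here; the lemma is a direct calculation. The only small point to be careful about is isolating the linear contribution cleanly — this is what restricting to $x = 0$ accomplishes, since then the terms $w_{ij}(v_i x_j + v_j x_i)$ that would otherwise mix into the coefficient of $c$ disappear, leaving just $\sum_i w_i v_i$. (One could alternatively differentiate $f(x + cv) = f(x)$ in $c$ at $c = 0$ and then set $x = 0$, but evaluating at $x = 0$ from the start is cleaner.) If one wanted to recover the analogous statement for the quadratic coefficients in a more general folding setting, one would examine the coefficient of $c^2$ and the mixed $c \cdot x_k$ terms; here, however, only the coefficient of $c$ is needed, and the proof amounts to one line of expansion.
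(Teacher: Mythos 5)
Your proof is correct and follows essentially the same route as the paper's: expand the invariance $f(x+cv)=f(x)$ and read off that the coefficient of the linear monomial $c$ must vanish, which is exactly $w_0-\sum_{i=1}^n w_i=0$. Specializing to $x=0$ first is only a minor streamlining of the paper's general coefficient comparison, not a different argument.
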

\begin{proof}
Expanding the equality $f(x+c(1,-1,\dots,-1)) = f(x)$, we get that
\begin{multline*}
        \theta + w_0 (x_0+c) + \sum_{i=1}^n w_i(x_i-c)+w_{00}(x_0+c)^2 \\+ \sum_{j=1}^n w_{0j}
        (x_0+c)(x_j-c)+ \sum_{1\leq i\leq j\leq n} w_{ij}(x_i- c)(x_j-c) \\=  \theta + \sum_{i=0}^{n}   w_i x_i + \sum_{0\leq i\leq j\leq n}w_{ij}x_ix_j.
\end{multline*}
Since this equation holds for all $c,x$, if we express the LHS and RHS as  polynomials in the variables $c, x_0, x_1,\ldots,x_n$, the corresponding coefficients must be the same.
If we look at the coefficients of the degree-1 monomial $c$, we have that  $w_0 - \sum_{i=1}^n w_i= 0$, and the lemma is proved.
\end{proof}
\section{Conclusion}

We have established two hardness results for proper agnostic learning of low-degree PTFs.  Our results show that even if there exist low-degree PTFs that are almost perfect hypotheses, it is computationally
hard to find low-degree PTF hypotheses that perform even slightly better than random guessing; in this
sense our hardness are rather strong.  However, our results do not rule out the possibility of efficient learning algorithms when $\eps$ is sub-constant, or if unrestricted hypotheses may be used.  Strengthening the hardness results along these lines is an important goal for future work, but may require significantly new ideas.

Another natural goal for future work is the following technical strengthening of our results:
show that for any constant $d$, it is hard to construct a degree-$d$ PTF that is consistent with $(\half + \eps)$ fraction of a given set of labeled examples, even if there exists a halfspace that is consistent with a $1-\eps$ fraction of the data.  Such a hardness result would subsume both of the results of this paper as well as much prior work, and would serve
as strong evidence that agnostically learning halfspaces under arbitrary distributions
is a computationally hard problem.
\appendix
\section*{Appendix}
\bigskip
\section{Probability inequalities}\label{app:prob}

We will use the Berry-Ess{\'e}en Theorem, which is a quantitative version of the Central Limit Theorem:

\begin{theorem} (Berry-Ess{\'e}en Theorem)\label{thm:berry}
Let $x_1,x_2,\ldots,x_n$ be i.i.d. uniform $\{-1,1\}$-valued random variables.  Let $c_1,\dots,c_n
\in \R$ be such that $\sum_{i=1}^{n}c_i^2 = 1$ and $\max_{i} |c_i|\leq \tau$. Let $g$ denote a unit Gaussian variable drawn from $N(0,1)$. Then for any $\theta \in \R$, we have
\[
        |\Pr[\sum_{i=1}^n c_{i} x_{i} \leq \theta] - \Pr[g\leq \theta]| \leq \tau.
\]
\end{theorem}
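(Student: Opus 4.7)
The plan is to prove this as a special case of the classical (non-identically-distributed) Berry-Ess\'een theorem, whose standard proof proceeds through characteristic functions and Esseen's smoothing lemma. Setting $S := \sum_{i=1}^n c_i x_i$, I observe that $\E[S] = 0$ and $\Var(S) = \sum_i c_i^2 = 1$, so it suffices to bound the Kolmogorov distance $\sup_\theta |\Pr[S\leq \theta] - \Pr[g\leq \theta]|$ by $\tau$.

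The first step is Esseen's smoothing inequality, which converts a pointwise bound on the difference of CDFs into an integral of the difference of characteristic functions: for any $T > 0$,
\[
\sup_\theta \bigl| F_S(\theta) - \Phi(\theta) \bigr| \;\leq\; \frac{1}{\pi}\int_{-T}^{T} \frac{|\phi_S(t) - e^{-t^2/2}|}{|t|}\,dt \;+\; \frac{C_1}{T}.
\]
Since each $x_i$ is uniform on $\{\pm 1\}$, $\E[e^{ic_i t x_i}] = \cos(c_i t)$, so by independence $\phi_S(t) = \prod_{i=1}^n \cos(c_i t)$, while the Gaussian characteristic function factors as $\prod_i e^{-c_i^2 t^2/2}$. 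A Taylor expansion in the small parameter $c_i t$ (which is genuinely small whenever $|t| \lesssim 1/\tau$, using $|c_i|\leq \tau$) gives $|\cos(c_i t) - e^{-c_i^2 t^2/2}| \leq C_2 |c_i t|^3$: the zeroth and second-order terms match exactly, while the third-order correction is controlled because the symmetry of $x_i$ makes its odd moments vanish.

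The third step lifts this per-factor bound to a product bound via the elementary telescoping inequality $|\prod u_i - \prod v_i| \leq \sum |u_i - v_i|$ for complex $|u_i|,|v_i| \leq 1$, combined with the observation that on the relevant range of $t$ one can write $\prod_{j\neq i}|\cos(c_j t)| \lesssim e^{-t^2/4}$ (using $\cos y \leq e^{-y^2/2+O(y^4)}$). This yields
\[
|\phi_S(t) - e^{-t^2/2}| \;\leq\; C_3\, e^{-t^2/4}\, |t|^3 \sum_{i=1}^n |c_i|^3.
\]
Crucially $\sum_i |c_i|^3 \leq \tau \sum_i c_i^2 = \tau$. Substituting into Esseen's inequality and taking $T$ of order $1/\tau$ (or letting $T\to\infty$ using the Gaussian decay), both terms come out $O(\tau)$, giving a bound of the form $C\tau$ for some absolute constant $C$.

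The main obstacle is matching the clean constant $1$ stated in the theorem rather than ending up with some multiplicative $C>1$. This is a well-studied classical problem: the optimal Berry-Ess\'een constant in the non-i.i.d.\ case is known to be below $0.6$, so after a careful bookkeeping pass the bound $\tau$ as stated does hold. Alternatively, one can entirely sidestep the constant-chasing by simply invoking the classical Berry-Ess\'een theorem as a black box with the Lyapunov third-moment form $C\sum_i \E|c_i x_i|^3 = C\sum_i |c_i|^3 \leq C\tau$, which is the approach I would take in practice.
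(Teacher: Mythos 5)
Your proposal is correct, and it matches the paper's treatment: the paper states this Berry-Ess\'een bound in the appendix as a classical tool without proof, implicitly relying on exactly the reduction you describe, namely that the standard non-identically-distributed Berry-Ess\'een theorem with Lyapunov third-moment term gives $C\sum_i |c_i|^3 \leq C\tau \sum_i c_i^2 = C\tau$, and the known absolute constant (below $0.6$, in particular below $1$) yields the stated bound $\tau$. Your sketch of the characteristic-function/smoothing-lemma argument is the standard route and is sound, but invoking the classical theorem as a black box, as you suggest at the end, is all that the paper requires.
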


We will also use the following anti-concentration result
for low-degree polynomials over Gaussian random variables, due to Carbery
and Wright:

\begin{theorem}[\cite{CW01}]\label{thm:carberywright}
Let $p: \R^n \to \R$ be a nonzero degree-$d$ polynomial over the reals. Then for all $\tau>0$, we have
\[ \Pr_{x \sim \N^n}[|p(x)|\leq \tau \|p\|_2 ]\leq O(d\tau^{1/d}) .\]
\end{theorem}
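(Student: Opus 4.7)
(Carbery--Wright).} The statement is an anti-concentration counterpart to hypercontractivity: a nonzero low-degree polynomial cannot put too much Gaussian mass into a neighborhood of zero. My plan is to prove it in two separated steps: (i) pass from the $L^2$-normalized form in the statement to an $L^1$-normalized form using Gaussian hypercontractivity, and then (ii) prove the $L^1$-normalized form via a log-concavity argument. I do not know how to avoid invoking serious geometric inequalities at step (ii), so the main work lies there.

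Step (i), hypercontractive reduction. It suffices to prove
\[
\Prx_{x\sim\mathcal{N}^n}\bigl[|p(x)|\le s\bigr]\le O\bigl(d\,(s/\|p\|_1)^{1/d}\bigr),
\]
where $\|p\|_1=\E|p(x)|$. Once this is established, Gaussian hypercontractivity (Bonami--Beckner) gives $\|p\|_4\le 3^{d/2}\|p\|_2$, and then Hölder's inequality $\|p\|_2^2\le\|p\|_1^{2/3}\|p\|_4^{4/3}$ yields $\|p\|_2\le 3^{d}\|p\|_1$. Substituting $s=\tau\|p\|_2$ and absorbing the factor $3^{d\cdot(1/d)}=3$ into the $O(\cdot)$ recovers the $\|p\|_2$ form in the theorem.

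Step (ii), the $L^1$ inequality via log-concavity. The Gaussian measure is log-concave, and the cleanest route is to prove the stronger statement that \emph{for any log-concave probability measure $\mu$ on $\R^n$ and any degree-$d$ polynomial $p$},
\[
\mu\{|p|\le s\}\le Cd\,(s/\|p\|_{L^1(\mu)})^{1/d}.
\]
The heart of this is to show a suitable convexity property of the map $s\mapsto\mu\{|p|\le s\}^{1/d}$. My plan is the slicing strategy of Carbery--Wright: reduce to $n=1$ by integrating over hyperplanes perpendicular to a well-chosen direction and using that the marginal of a log-concave measure is log-concave (Pr\'ekopa--Leindler). For the base case $n=1$, the density of $\mu$ is bounded, so it suffices to bound the one-dimensional Lebesgue measure $|\{y\in\R:|p(y)|\le s\}|$; factoring $p(y)=c\prod_{j=1}^{d}(y-r_j)$ and pigeonholing (if the product is small, some factor is small), one bounds this set by a union of $d$ intervals. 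Real roots contribute intervals of length $O((s/|c|)^{1/d})$ directly; for complex-conjugate pairs $r_j=a\pm bi$, the estimate $|y-r_j|\ge|b|$ forces $|y-a|\le(s/|c|)^{1/d}$, giving the same bound. Summing over the $d$ roots yields a bound proportional to $d(s/|c|)^{1/d}$, and the relationship $\|p\|_{L^1(\mu)}\asymp|c|$ up to a $d$-dependent constant (provable via comparison with the $L^1$ norm of a monic polynomial of degree $d$, which reduces to a bounded set by truncating tails) closes the loop.

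Main obstacle. The technically difficult part is step (ii), specifically the $n\to 1$ reduction with dimension-free constants. The naive slicing loses a factor that is exponential in $d$, which Carbery--Wright avoid by using the Pr\'ekopa--Leindler inequality together with the observation that the sublevel sets $\{|p|\le s\}$ behave like $d$-th roots of convex sets under log-concave measures. Executing this carefully is the core of the original paper and is what one would need to reproduce in full to obtain the theorem from scratch; for the purposes of the present work, we use the theorem as a black box (as cited), and the above outline is offered only as a sketch of how such a proof would proceed.
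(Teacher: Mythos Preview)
The paper does not prove this theorem at all: it is stated in the appendix as a cited result from~\cite{CW01} and used purely as a black box (in Claim~\ref{claim:small-norm}, at the end of Lemma~\ref{lem:soundness}, and in Theorem~\ref{thm:closed}). There is therefore no ``paper's own proof'' to compare against, and indeed you acknowledge this yourself in the final paragraph of your proposal.

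As for the sketch you give, it is a faithful outline of the original Carbery--Wright argument: the hypercontractive reduction from $L^2$ to $L^1$ normalization in Step~(i) is correct (your interpolation $\|p\|_2^2\le\|p\|_1^{2/3}\|p\|_4^{4/3}$ together with $\|p\|_4\le 3^{d/2}\|p\|_2$ does yield $\|p\|_2\le 3^d\|p\|_1$), and Step~(ii) correctly identifies the log-concave slicing via Pr\'ekopa--Leindler and the one-dimensional root-factoring estimate as the substantive content. Your caveat about the ``main obstacle'' is apt: getting the $n\to 1$ reduction with an absolute constant is exactly where the real work of~\cite{CW01} lies, and your one-dimensional endgame (relating $\|p\|_{L^1(\mu)}$ to the leading coefficient $|c|$) would need more care to avoid losing a super-polynomial factor in $d$. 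But since the paper never attempts any of this, your proposal is strictly more than what the paper contains.
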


\bibliographystyle{abbrv}
\bibliography{learning,everything}

\end{document}